\newtheorem{theorem}{Theorem}
\newtheorem{lemma}{Lemma}
\newtheorem{corollary}{Corollary}
\newcommand{\cmt}[1]{}
\newcommand{\vect}[1]{\ensuremath{\bm{#1}}}
\newcommand{\E}{\ensuremath{\mathbb{E}}}
\title{Non-monotone DR-submodular Maximization: Approximation and Regret Guarantees
\thanks{Research supported by the ANR project OATA n\textsuperscript{o} ANR-15-CE40-0015-01}}
\author[1]{Christoph D\"urr
}
\author[2]{Nguyễn Kim Thắng
}
\author[2]{Abhinav Srivastav
}
\author[2]{Léo Tible
}
\affil[1]{LIP6, Sorbonne Universit\'e \\}
\affil[2]{IBISC, Univ \'Evry, Universit\'e Paris-Saclay\\}
\begin{document}

\maketitle 

\begin{abstract}


Diminishing-returns (DR) submodular optimization is an important field with many real-world applications in machine learning, economics and communication systems. It captures a subclass of non-convex optimization that provides both practical and theoretical guarantees.

In this paper, we study the fundamental problem of maximizing non-monotone DR-submodular functions over down-closed and general convex sets in both offline and online settings.  First, we show that for offline maximizing non-monotone DR-submodular functions over a general convex set, the Frank-Wolfe algorithm achieves an approximation guarantee which depends on the
convex set. Next, we show that the Stochastic Gradient Ascent algorithm achieves a 1/4-approximation ratio with 
the regret  of $O(1/\sqrt{T})$ for the problem of maximizing non-monotone DR-submodular functions over down-closed convex 
sets. These are the first approximation guarantees in the corresponding settings. Finally we benchmark these algorithms on problems arising in machine learning domain with the real-world datasets.

\end{abstract}

\section{Introduction}


We consider the fundamental problem of optimizing DR-submodular function over a convex set. This problem has recently gained a significant attention in both, machine learning and theoretical computer science communities~\cite{Bach16:Submodular-functions:,BianBuhmann18:Optimal-DR-Submodular,BianLevy17:Non-monotone-Continuous,ChenHassani18:Online-continuous,HassaniSoltanolkotabi17:Gradient-methods,NiazadehRoughgarden18:Optimal-Algorithms,StaibJegelka17:Robust-Budget} due to its numerous applications in formulating real-world problems. Some examples of this can be found in~\cite{ItoFujimaki16:Large-scale-price}, \cite{KuleszaTaskar12:Determinantal-point}, and \cite{DjolongaKrause14:From-MAP-to-marginals:}.

Previous work on this problem have been focused either on smooth and/or monotone DR-submodular functions or, on unconstrained or down-closed convex sets. 
Though, the majority of real-world problems can be formulated as non-monotone DR-submodular functions over a constrained convex set. 
Hence in this paper, we investigate the problem of maximizing constrained non-monotone DR-submodular functions. 
Our contribution is twofold.  First, we provide an approximation algorithm for maximizing smooth non-monotone DR-submodular function over general convex sets. 
Second we provide an online algorithm for maximizing non-monotone DR-submodular function over down-closed convex sets.  
Prior to this work, no theoretical guarantees were known for both of these problems.


Without the loss of generality, we assume that the DR-submodular function $F$ is positive, and that at any point $\vect{x}$ in the convex set $\cal K$, $F(\vect{x})$ and its gradient (denoted by $\nabla F(\vect{x})$) can be evaluated in the polynomial time.  In addition, we assume that the projection of any point $x\in[0,1]^n$ on $\cal K$ can be computed in polynomial time (this implies the availability of a polynomial time membership oracle).

	Our offline algorithm is a discrete time local search procedure which produces a solution $\vect{x}^{T}$ after
	$T$ iterations, such that
	$$
		F(\vect{x}^{T}) \geq \max_{\vect{x}^{*} \in \mathcal{K}} \alpha \cdot F(\vect{x}^{*}) - \beta
	$$
	where $\alpha, \beta$ are some parameters. Such an algorithm is called an $\alpha$-\emph{approximation} with
	\emph{convergence rate} $\beta$.

The online setting consists of discrete time steps $t=1,\ldots, T$ for some time horizon $T$.  At each step $t$ the algorithm first outputs a point $\vect{x}^t \in {\cal K}$, and then learns  a function $F^t : {\cal K} \rightarrow {\mathbb R}^+$.  The value $F^t(\vect{x}^t)$ is called its reward and the goal of the algorithm is to maximize the average reward. Hence, the goal is to minimize the regret. Formally we say that an algorithm achieves $(\alpha, \beta)$-\emph{regret} if it produces points $\vect{x}^{t}$ such that
	\[
		 \frac{1}{T} \sum_{t=1}^T F^t(\vect{x}^t)
		 	\geq \alpha \cdot \max_{\vect{x}^{*}\in \cal K} \frac{1}{T} \sum_{t=1}^T F^t(\vect{x}^{*}) - \beta.
	\]
	Equivalently, we say that the algorithm has $\alpha$-\emph{regret} at most $\beta$. The factor $\alpha$ is also called the
	\emph{approximation ratio} of the algorithm.

\subsection{Our contributions}
Exploring the underlying properties DR-submodularity, we design algorithms with performance guarantees for each of the above mentioned settings.  Our contributions are summarized as follows: (also see Table~\ref{table:DR-submodular}).
\newcommand{\STAB}[1]{\begin{tabular}{@{}c@{}}#1\end{tabular}}
\renewcommand{\arraystretch}{1.5}

\begin{table}[th]
  \small
  \centering
  \begin{tabular}{|c|c||c|c|c|c|}
    \hline
     \multicolumn{2}{|c||}{} & \multicolumn{2}{|c|}{Monotone} & \multicolumn{2}{|c|}{Non-monotone}\\
\cline{3-6}
     \multicolumn{2}{|c||}{} & smooth  & non-smooth & smooth  & non-smooth \\
    \hline
    \hline
    \multirow{3}{*}{\STAB{\rotatebox[origin=c]{90}{Offline}}}& unconstrained &  & & & $\frac{1}{2}$-approx \\
    		& &  & & &  \multicolumn{1}{c|}{\cite{BianBuhmann18:Optimal-DR-Submodular,NiazadehRoughgarden18:Optimal-Algorithms}} \\
    		\cline{2-6}
    		 & down-closed & $\left( 1 - \frac{1}{e}, O\bigl(\frac{1}{\sqrt{T}}\bigr) \right)$  & & $\left(\frac{1}{e}, O\bigl(\frac{1}{T}\bigr) \right)$   &  \\
		 &  & \cite{BianMirzasoleiman17:Guaranteed-Non-convex} & &  \cite{BianLevy17:Non-monotone-Continuous}  &  \\
		 \cline{2-6}
		 & general  &  & $\left( \frac{1}{2}, O\bigl(\frac{1}{\sqrt{T}}\bigr) \right)$ 
		 &\color{red}{$\left(  \frac{1- \min \limits_{\vect{x} \in \mathcal{K}} \|\vect{x}\|_{\infty} }{3\sqrt{3}}, O\bigl(\frac{1}{\ln^{2}T}\bigr) \right)$}&   \\
		 &   &  & \cite{HassaniSoltanolkotabi17:Gradient-methods}
		 &&  \\    \hline
    \hline
     \multirow{3}{*}{\STAB{\rotatebox[origin=c]{90}{Online}}}& unconstrained & & & &  \\
    		\cline{2-6}
    		 & down-closed  & & &
    		 &\color{red}{$ \left( \frac{1}{4}, O\bigl(\frac{1}{\sqrt{T}}\bigr) \right)$} \\
		 \cline{2-6}
		 & general  & $\left( 1 - \frac{1}{e}, O\bigl(\frac{1}{\sqrt{T}}\bigr) \right)$   & $\left( \frac{1}{2}, O\bigl(\frac{1}{\sqrt{T}}\bigr) \right)$  &   &  \\
		 &   &  \cite{ChenHassani18:Online-continuous}  &  \cite{ChenHassani18:Online-continuous} &   &  \\
    \hline
  \end{tabular}

  \caption{Summary of results on DR-submodular maximization. Results from the present paper are shown in red. Entries $(\alpha,\beta)$ refer either to an $\alpha$-approximation offline algorithm with convergence ratio $\beta$ or to an $(\alpha, \beta)$-regret online algorithm.
	}
  \label{table:DR-submodular}
\end{table}

\begin{description}
	\item[Offline setting.]
		First, we consider the problem of maximizing a non-monotone DR-submodular function over general convex sets. 
 		This problem has been proved to be hard. Specifically, any constant-approximation algorithm for the problem
		over a general convex domain must require exponentially many value queries
		to the function \cite{Vondrak13:Symmetry-and-approximability}. Determining the approximation ratio
		as a function depending on the problem parameters and characterizing necessary and sufficient
		regularity conditions that enable efficient approximation algorithm for the problem constitute
		an important direction.

		We show that the celebrated Frank-Wolf algorithm achieves
		an approximation ratio of $\left(\frac{1-\min_{\vect{x} \in \mathcal{K}} \|\vect{x}\|_{\infty} }{3\sqrt{3}}\right)$
		with the rate of convergence of $O\bigl(\frac{n}{\ln^{2}T}\bigr)$, where
		$T$ is the number of iterations applied in the algorithm.
		In particular, if the domain $\cal K$
		(not necessarily down-closed\footnote{We refer to Section~\ref{sec:pre} for a formal definition})
		contains the origin $\vect{0}$ then for arbitrary constant $\epsilon > 0$,
		after $T = O\bigl(e^{\sqrt{n/\epsilon}}\bigr)$ (sub-exponential)
		iterations, the algorithm outputs a solution $\vect{x}^{T}$
		such that $F(\vect{x}^{T}) \geq \left(\frac{1}{3\sqrt{3}} \right) \max_{\vect{x}^{*} \in \mathcal{K}} F(\vect{x}^{*}) - \epsilon$.
		To the best of our knowledge, this is the first algorithm with an approximation guarantee for maximizing non-monotone 
		DR-submodular function overs general convex sets.
	\item[Online setting.]
		DR-submodular maximization has been studied in online environments but only for monotone functions.
		However, in numerous applications the functions are intrinsically non-monotone DR-submodular.
		The quest of algorithms with performance guarantee for online non-monotone DR-submodular maximization
		is a major research line.

		We show that the Online Gradient Ascent algorithm achieves $(1/4, O(1/\sqrt{T}))$-regret.
		The result holds also if only unbiased estimates of the gradients are available.
		Prior to our work, no approximation guarantee has been shown even for the simpler setting of maximizing 
		online non-monotone DR-submodular functions over an unconstrained hypercube (i.e., $\mathcal{K} = [0,1]^{n}$).
	%
	%
	\item[Experiments.]
		We experimentally demonstrate the efficiency of our algorithms on the problems arising in domain of  machine learning. We conduct following three set of experiments. 
		\begin{enumerate} 
			\item We compare the performance of offline Gradient Ascent algorithm against the previous known algorithms for maximizing DR-submodular function over \emph{down-closed} polytopes. Note that our theoretical guarantee holds for more general case. 
			\item We show the performance of offline Gradient Ascent algorithm for revenue maximizing problem on the real-world dataset (Advogato user-user relationship graph) over a \emph{general} (not down-closed)  polytope. 
			\item We show the performance of online Frank-Wolfe algorithm for revenue maximizing revenue on the real-world dataset (Facebook user-user relationship graph) on a  \emph{down-closed} polytope. 
		\end{enumerate}
\end{description}


\subsection{Related work}

Submodular optimization has been widely studied for decades \cite{NemhauserWolsey78:An-analysis-of-approximations,Fujishige05:Submodular-functions}. The domain has been investigating even more extensively
in recent years due to numerous applications in statistics and machine learning, for example
active learning \cite{GolovinKrause11:Adaptive-submodularity:},
viral marketing \cite{KempeKleinberg03:Maximizing-the-spread},
network monitoring \cite{Gomez-RodriguezLeskovec10:Inferring-networks},
document summarization \cite{LinBilmes11:A-class-of-submodular},
crowd teaching \cite{SinglaBogunovic14:Near-Optimally-Teaching},
feature selection \cite{ElenbergKhanna18:Restricted-strong},
deep neural networks \cite{ElenbergDimakis17:Streaming-weak},
diversity models \cite{DjolongaTschiatschek16:Variational-inference} and
recommender systems \cite{GuilloryBilmes11:Simultaneous-Learning}.

\paragraph{Offline submodular/DR-submodular optimization.}
The problem of submodular (set) minimization has been studied in \cite{Schrijver00:A-combinatorial-algorithm,IwataFleischer01:A-combinatorial-strongly}.
See \cite{Bachothers13:Learning-with} for a survey on connections with and applications in machine learning.
Submodular (set) maximization is an NP-hard problem. Several approximation algorithms have been given in
the offline setting, for example a 1/2-approximation for unconstrained domains
\cite{BuchbinderFeldman15:A-tight-linear,BuchbinderFeldman18:Deterministic-algorithms},
a $(1-1/e)$-approximation for monotone smooth submodular functions \cite{CalinescuChekuri11:Maximizing-a-monotone,ChekuriJayram15:On-multiplicative-weight},
or a $(1/e)$-approximation for non-motonotone submodular functions on down-closed polytopes \cite{FeldmanNaor11:A-unified-continuous,ChekuriJayram15:On-multiplicative-weight}.

Continuous extension of submodular functions play a crucial role in submodular optimization, especially in submodular
maximization, including the multilinear relaxation and the softmax extension. These belong to the class of
DR-submodular functions. \citet{BianMirzasoleiman17:Guaranteed-Non-convex} considered the
problem of maximizing monotone DR-functions subject to down-closed convex domains and proved that
the greedy method proposed by \cite{CalinescuChekuri11:Maximizing-a-monotone}, which is a variant of the Frank-Wolfe algorithm, guarantees a $(1-1/e)$-approximation.
It has been observed by \citet{HassaniSoltanolkotabi17:Gradient-methods}
that the greedy method is not robust in stochastic settings
(where only unbiased estimates of gradients are available). Subsequently, they
showed that the gradient methods achieve $1/2$-approximations in stochastic settings.
Maximizing non-monotone DR-submodular functions is harder. Very recently,
\citet{BianBuhmann18:Optimal-DR-Submodular} and \citet{NiazadehRoughgarden18:Optimal-Algorithms}
have independently presented algorithms with the same approximation guarantee of 1/2 for the problem of
maximizing non-monotone DR-submodular functions over a hypercube.
Both algorithm are inspired by the bi-greedy algorithm in \cite{BuchbinderFeldman15:A-tight-linear,BuchbinderFeldman18:Deterministic-algorithms}.
\citet{BianLevy17:Non-monotone-Continuous} made a further step by providing an
$1/e$-approximation algorithm over down-closed convex sets.
Remark that when aiming for approximation algorithms, the restriction to down-closed polytopes is unavoidable.
Specifically, \citet{Vondrak13:Symmetry-and-approximability}
proved that any algorithm for the problem over a non-down-closed domain
that guarantees a constant approximation must require exponentially many value queries to the function.

\paragraph{Online submodular/DR-submodular optimization.}
Online optimization has been broadly studied for convex/concave functions \cite{Hazanothers16:Introduction-to-online}.
An important research agenda is to design algorithms with performance guarantees
in terms of regret and approximation for non-convex functions in general and for DR-submodular functions
in particular. \citet{ChenHassani18:Online-continuous}
have considered the online problem of maximizing monotone DR-submodular functions
and provided an $(1-1/e)$-approximation with regret $O(\sqrt{T})$ when the functions are smooth.
More generally, if the functions are not necessarily smooth, they proved that the online gradient ascent algorithm
achieves a 1/2-approximation with regret $O(\sqrt{T})$. No guarantee has been shown for online
maximizing non-monotone DR-submodular functions. Very recently,
\citet{RoughgardenWang18:An-Optimal-Algorithm} have studied the online problem of
maximizing submodular (set) functions over the unconstrained domain $[0,1]^n$.
They gave an optimal $(1/2, O(\sqrt{T}))$-regret algorithm.




\section{Preliminaries and Notations}		\label{sec:pre}
We introduce some basic notions, concepts and lemmas which will be used throughout the paper.
We use boldface letters, e.g., $\vect{x}, \vect{z}$ to represent vectors.
We denote $x_{i}$ as the $i^{\text{th}}$ entry of $\vect{x}$ and $\vect{x}^{t}$ as the decision vector at time step $t$.
For two n-dimensional vectors $\vect{x}, \vect{y}$, we say that $\vect{x} \leq \vect{y}$ iff $x_{i} \leq y_{i}$ for all $1 \leq i \leq n$.
Moreover, $\vect{x} \vee \vect{y}$ is defined as a vector such that $(x \vee y)_{i} = \max\{x_{i}, y_{i}\}$
and similarly $\vect{x} \wedge \vect{y}$ is a vector such that $(x \wedge y)_{i} = \min\{x_{i}, y_{i}\}$.
In the paper, we use the Euclidean norm $\|\cdot\|$ by default (so the dual norm is itself).
The infinity norm $\|\cdot\|_{\infty}$ is defined as $\|\vect{x}\|_{\infty} = \max \limits_{i=1}^{n} |{x_{i}}|$.

In the paper, we consider a bounded convex domain $\mathcal{K}$ and w.l.o.g. assume that
$\mathcal{K} \subseteq [0,1]^{n}$. We say that $\mathcal{K}$ is \emph{unconstrained} if $\mathcal{K} = [0,1]^{n}$;
and $\mathcal{K}$ is \emph{down-closed} if for every $\vect{z} \in \mathcal{K}$ and $\vect{y} \leq \vect{z}$
then $\vect{y} \in \mathcal{K}$; and $\mathcal{K}$ is \emph{general} if $\mathcal{K}$ is simply a convex
domain without any particular property. Besides, the \emph{diameter} of the convex domain $\mathcal{K}$ (denoted by $D$) is
defined as $\sup_{\vect{x}, \vect{y} \in \mathcal{K}} \norm{\vect{x} - \vect{y}}$. The
\emph{projection} of a point $\vect{x}$ onto a convex set $\mathcal{K}$ is a
point in $\mathcal{K}$ that is closest to $\vect{x}$; formally defined as follows.
\begin{align}
	\textrm{Proj}_{\mathcal{K}}(\vect{x}) := \arg \min\limits_{\vect{y} \in \mathcal{K}} \norm{\vect{y} - \vect{x}}
\end{align}
A useful property of projections is that they satisfy the Pythagorean inequality, that is for any $\vect{z} \in \mathcal{K}$
and for any $\vect{x}$,
$$
\norm{\textrm{Proj}_{\mathcal{K}}(\vect{x}) - \vect{z}} \leq \norm{\vect{x} - \vect{z}}.
$$

A function $f: \{0,1\}^{n} \rightarrow \mathbb{R}^{+}$ is \emph{submodular} if
for all $\vect{x} \geq \vect{y} \in \{0,1\}^{n}$,
\begin{align}	\label{def:sub}
	f(\vect{x} \vee \vect{a}) - f(\vect{x}) \leq f(\vect{y} \vee \vect{a}) - f(\vect{y})
		\qquad \forall \vect{a} \in \{0,1\}^{n}.
\end{align}
Submodular functions can be generalized over continuous domains. A function
$F: [0,1]^{n} \rightarrow \mathbb{R}^{+}$ is \emph{DR-submodular} if for all vectors
$\vect{x},\vect{y} \in [0,1]^{n}$ with $\vect{x}  \geq \vect{y}$, any basis vector $\vect{e}_{i} = (0,\ldots,0,1,0,\ldots,0)$ and any constant
$\alpha > 0$ such that
$\vect{x} + \alpha \vect{e}_{i} \in [0,1]^{n}$, $\vect{y} + \alpha \vect{e}_{i} \in [0,1]^{n}$,
it holds that
\begin{align}	\label{def:DR-sub}
	F(\vect{x} + \alpha \vect{e}_{i}) - F(\vect{x}) \leq F(\vect{y} + \alpha \vect{e}_{i}) - F(\vect{y}).
\end{align}
Note that if function $F$ is differentiable then the diminishing-return (DR) property (\ref{def:DR-sub}) is equivalent to
\begin{align}	\label{def:DR-sub-diff}
\nabla F(\vect{x}) \leq \nabla F(\vect{y}) \qquad \forall \vect{x} \geq \vect{y} \in [0,1]^{n}.
\end{align}
Moreover, if $F$ is twice-differentiable then
the DR property is equivalent to all of the entries of its Hessian being non-positive, i.e.,
$\frac{\partial^{2} F}{\partial x_{i} \partial x_{j}} \leq 0$ for all $1 \leq i, j \leq n$.
A differentiable function $F: \mathcal{K} \subseteq \mathbb{R}^n \rightarrow \mathbb{R}$ is said to be \textit{$\beta$-smooth} if for any $\vect{x}, \vect{y} \in \mathcal{K}$, we have
\begin{align}	\label{ineq:smooth1}
F(\vect{y}) \leq F(\vect{x}) + \langle \nabla F(\vect{x}), \vect{y} - \vect{x} \rangle + \frac{\beta}{2} \|\vect{y} - \vect{x}\|^{2}
\end{align}
or equivalently,
\begin{align}	\label{ineq:smooth2}
	\norm{\nabla F(\vect{x}) - \nabla F(\vect{y})} \leq \beta \norm{\vect{x} - \vect{y}}.
\end{align}

\subsection*{Properties of DR-submodularity}
In the following, we present properties of DR-submodular functions that are are crucial in our analyses.
The properties have been proved in \cite{HassaniSoltanolkotabi17:Gradient-methods} and \cite{BianLevy17:Non-monotone-Continuous}. For completeness, we provide their proofs in the appendix.

\begin{lemma}[\cite{HassaniSoltanolkotabi17:Gradient-methods}] 	\label{lem:prop2}
For every $\vect{x}, \vect{y} \in \mathcal{K}$ and any DR-submodular function $F: [0,1]^{n} \rightarrow \mathbb{R}^{+}$,
it holds that
$$ \langle \nabla F(\vect{x}), \vect{y} - \vect{x}\rangle \geq F(\vect{x} \vee \vect{y}) + F(\vect{x} \wedge \vect{y}) - 2F(\vect{x}). $$
\end{lemma}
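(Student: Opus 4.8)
The plan is to reduce the inequality $\langle \nabla F(\vect{x}), \vect{y} - \vect{x}\rangle \geq F(\vect{x} \vee \vect{y}) + F(\vect{x} \wedge \vect{y}) - 2F(\vect{x})$ to two one-dimensional path integrals, one moving from $\vect{x}$ up towards $\vect{x}\vee\vect{y}$ and one moving from $\vect{x}$ down towards $\vect{x}\wedge\vect{y}$, and then to compare these integrals coordinate-by-coordinate against the linear term on the left using the DR (diminishing-returns) property in its differential form~(\ref{def:DR-sub-diff}). Concretely, write $\vect{y}-\vect{x} = (\vect{x}\vee\vect{y} - \vect{x}) + (\vect{x}\wedge\vect{y} - \vect{x})$, noting that the first summand is supported on the coordinates $i$ with $y_i > x_i$ and is nonnegative there, while the second is supported on the coordinates with $y_i < x_i$ and is nonpositive there; these two index sets are disjoint, so $\langle \nabla F(\vect{x}), \vect{y}-\vect{x}\rangle$ splits as a sum of two inner products with no overlap.

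First I would handle the ``up'' part. Let $\vect{z} = \vect{x}\vee\vect{y}$ and consider $g(t) = F(\vect{x} + t(\vect{z}-\vect{x}))$ for $t\in[0,1]$, so that $F(\vect{z}) - F(\vect{x}) = \int_0^1 \langle \nabla F(\vect{x}+t(\vect{z}-\vect{x})), \vect{z}-\vect{x}\rangle\, dt$. Since $\vect{x} + t(\vect{z}-\vect{x}) \geq \vect{x}$ for all $t\in[0,1]$, the differential DR property~(\ref{def:DR-sub-diff}) gives $\nabla F(\vect{x}+t(\vect{z}-\vect{x})) \leq \nabla F(\vect{x})$ coordinatewise; pairing this with the nonnegative vector $\vect{z}-\vect{x}$ yields $\langle \nabla F(\vect{x}+t(\vect{z}-\vect{x})), \vect{z}-\vect{x}\rangle \leq \langle \nabla F(\vect{x}), \vect{z}-\vect{x}\rangle$, hence $F(\vect{x}\vee\vect{y}) - F(\vect{x}) \leq \langle \nabla F(\vect{x}), \vect{x}\vee\vect{y} - \vect{x}\rangle$. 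Symmetrically, for the ``down'' part let $\vect{w} = \vect{x}\wedge\vect{y}$ and integrate along $\vect{x} + t(\vect{w}-\vect{x})$; now $\vect{x}+t(\vect{w}-\vect{x}) \leq \vect{x}$, so~(\ref{def:DR-sub-diff}) gives $\nabla F(\vect{x}+t(\vect{w}-\vect{x})) \geq \nabla F(\vect{x})$ coordinatewise, and pairing with the nonpositive vector $\vect{w}-\vect{x}$ reverses the inequality to again yield $F(\vect{x}\wedge\vect{y}) - F(\vect{x}) \leq \langle \nabla F(\vect{x}), \vect{x}\wedge\vect{y} - \vect{x}\rangle$. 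Adding the two inequalities and using the splitting from the first paragraph gives exactly $F(\vect{x}\vee\vect{y}) + F(\vect{x}\wedge\vect{y}) - 2F(\vect{x}) \leq \langle \nabla F(\vect{x}), \vect{y}-\vect{x}\rangle$.

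The main subtlety — and the step I would be most careful about — is the sign bookkeeping when pairing a coordinatewise gradient inequality with a vector of mixed or fixed sign: the inequality flips exactly when the direction vector is nonpositive, which is why the ``down'' branch still produces an upper bound on $F(\vect{x}\wedge\vect{y}) - F(\vect{x})$ rather than a lower bound. One should also make sure the line segments $[\vect{x},\vect{x}\vee\vect{y}]$ and $[\vect{x},\vect{x}\wedge\vect{y}]$ stay inside $[0,1]^n$ (they do, since both endpoints lie in $[0,1]^n$ and the cube is convex) so that $\nabla F$ is defined and DR-submodularity applies along them. If one prefers to avoid differentiability of $F$ entirely, the same argument goes through by replacing the path integrals with telescoping sums of finite differences along axis directions and invoking~(\ref{def:DR-sub}) directly, but since the lemma is stated for the differentiable case the integral form is cleanest.
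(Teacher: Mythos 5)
Your proposal is correct and follows essentially the same route as the paper's proof: integrate $\nabla F$ along the segments from $\vect{x}$ to $\vect{x}\vee\vect{y}$ and from $\vect{x}$ to $\vect{x}\wedge\vect{y}$, use the differential DR property~(\ref{def:DR-sub-diff}) to bound each integrand by $\langle\nabla F(\vect{x}),\cdot\rangle$, and sum using $(\vect{x}\vee\vect{y})+(\vect{x}\wedge\vect{y})=\vect{x}+\vect{y}$. The only cosmetic difference is that the paper parametrizes the ``down'' leg from $\vect{x}\wedge\vect{y}$ up to $\vect{x}$ and evaluates the gradient at the upper endpoint, whereas you parametrize downward from $\vect{x}$ and absorb the sign flip into the nonpositive direction vector; the two computations are identical.
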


\begin{lemma}[\cite{BianLevy17:Non-monotone-Continuous}]	\label{lem:prop1}
For any DR-submodular function $F$ and for all $\vect{x}, \vect{y}, \vect{z} \in \mathcal{K}$ it holds that
$$F(\vect{x} \vee \vect{y}) + F(\vect{x} \wedge \vect{y}) + F(\vect{z}^* \vee \vect{z}) + F(\vect{z}^* \wedge \vect{z}) \geq F(\vect{y})$$
where $\vect{z}^* = (\vect{x} \vee \vect{y}) - \vect{x}$.
\end{lemma}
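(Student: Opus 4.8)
The plan is to derive the four-term bound from two elementary structural facts about non-negative DR-submodular functions, both obtained by telescoping the one-coordinate inequality~(\ref{def:DR-sub}); no smoothness is needed. First I would record the identities forced by the definition of $\vect{z}^*$: coordinate-wise $z^*_i = \max\{x_i,y_i\} - x_i = (y_i - x_i)^+$, so $\vect{z}^* \ge \vect{0}$; moreover $\vect{z}^* = \vect{y} - (\vect{x}\wedge\vect{y})$, whence $(\vect{x}\wedge\vect{y}) + \vect{z}^* = \vect{y}$ and $\vect{x} + \vect{z}^* = \vect{x}\vee\vect{y}$. Note that every point occurring in the statement lies in $[0,1]^n$, so $F$ is evaluated only where it is defined, even when $\mathcal{K}$ fails to be down-closed.

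The two tools are: (i) a generalized diminishing-returns inequality — for $\vect{u}\le\vect{v}$ in $[0,1]^n$ and $\vect{w}\ge\vect{0}$ with $\vect{v}+\vect{w}\in[0,1]^n$ one has $F(\vect{v}+\vect{w}) - F(\vect{v}) \le F(\vect{u}+\vect{w}) - F(\vect{u})$, proved by writing $\vect{w}=\sum_i w_i\vect{e}_i$ and applying~(\ref{def:DR-sub}) to each increment along a common order of coordinates; and (ii) the sub-additivity bound $F(\vect{a}+\vect{b}) \le F(\vect{a}) + F(\vect{b})$ for $\vect{a},\vect{b}\ge\vect{0}$ (sum in the cube), which is~(i) with $\vect{u}=\vect{0}$ together with $F\ge 0$. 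Applying~(ii) with $\vect{a} = \vect{x}\wedge\vect{y}$ and $\vect{b} = \vect{z}^*$, and using $(\vect{x}\wedge\vect{y}) + \vect{z}^* = \vect{y}$, already gives $F(\vect{y}) \le F(\vect{x}\wedge\vect{y}) + F(\vect{z}^*)$, so the first and last of the original four terms get accounted for once we bound $F(\vect{z}^*)$ by the remaining two, $F(\vect{x}\vee\vect{y})$ and $F(\vect{z}^*\vee\vect{z})$ (discarding $F(\vect{z}^*\wedge\vect{z})\ge 0$ if needed).

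For that remaining bound I would split $\vect{z}^* = (\vect{z}^*\wedge\vect{z}) + \big((\vect{z}^*\vee\vect{z}) - \vect{z}\big)$, both summands being $\ge\vect{0}$, apply~(ii) again to peel off $F(\vect{z}^*\wedge\vect{z})$, and then bound $F\big((\vect{z}^*\vee\vect{z}) - \vect{z}\big)$ in terms of $F(\vect{z}^*\vee\vect{z})$ and $F(\vect{x}\vee\vect{y})$ by feeding suitable non-negative gap vectors into tool~(i), using the containments $(\vect{z}^*\vee\vect{z}) - \vect{z} \le \vect{z}^*\vee\vect{z}$, $\vect{z}^*\le\vect{x}\vee\vect{y}$, and the identity $\vect{x}+\vect{z}^*=\vect{x}\vee\vect{y}$ from the first step. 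The hard part is exactly this last step: since $F$ is non-monotone one cannot discard "excess" coordinates when moving to a coordinate-wise larger point, so the whole argument hinges on choosing the right pairing of base points and gap vectors so that tool~(i) applies in the correct direction; once that pairing is fixed the remainder is bookkeeping, with $F\ge 0$ used only to drop the $-F(\vect{0})$ terms produced along the way.
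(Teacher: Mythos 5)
Your strategy is sound and rests on exactly the same two ingredients as the paper's proof: the telescoped coordinate-wise DR inequality (your tool (i)), subadditivity plus $F\geq 0$ (your tool (ii)), and the identities $(\vect{x}\wedge\vect{y})+\vect{z}^{*}=\vect{y}$ and $\vect{x}+\vect{z}^{*}=\vect{x}\vee\vect{y}$. Your first step, $F(\vect{y})\leq F(\vect{x}\wedge\vect{y})+F(\vect{z}^{*})$, is precisely the paper's inequality $F(\vect{z}^{*})+F(\vect{x}\wedge\vect{y})\geq F(\vect{y})+F(\vect{0})$. The step you leave open --- bounding $F\bigl((\vect{z}^{*}\vee\vect{z})-\vect{z}\bigr)$ by $F(\vect{x}\vee\vect{y})+F(\vect{z}^{*}\vee\vect{z})$ --- does close with the pairing you hint at: take $\vect{u}=(\vect{z}^{*}\vee\vect{z})-\vect{z}$, $\vect{v}=\vect{z}^{*}\vee\vect{z}$ and gap $\vect{g}=\vect{x}\vee\vect{y}-\vect{u}$, which is non-negative because $\vect{u}\leq\vect{z}^{*}\leq\vect{x}\vee\vect{y}$; tool (i) then gives $F(\vect{x}\vee\vect{y})-F(\vect{u})\geq F(\vect{z}+\vect{x}\vee\vect{y})-F(\vect{z}^{*}\vee\vect{z})\geq -F(\vect{z}^{*}\vee\vect{z})$, as required. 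So the proposal is correct, but the paper is more economical at exactly this point: it applies tool (i) \emph{once}, with $\vect{u}=\vect{z}^{*}\leq\vect{v}=\vect{z}^{*}\vee\vect{z}$ and gap $\vect{x}$ (this is its identity $\vect{x}\vee\vect{y}-\vect{z}^{*}=(\vect{x}+\vect{z})\vee\vect{y}-\vect{z}\vee\vect{z}^{*}=\vect{x}$), obtaining $F(\vect{z}^{*})\leq F(\vect{x}\vee\vect{y})+F(\vect{z}^{*}\vee\vect{z})$ directly. Hence your intermediate split $\vect{z}^{*}=(\vect{z}^{*}\wedge\vect{z})+\bigl((\vect{z}^{*}\vee\vect{z})-\vect{z}\bigr)$, and with it the term $F(\vect{z}^{*}\wedge\vect{z})$, is not actually needed: the paper in effect proves the stronger three-term bound and only adds $F(\vect{z}^{*}\wedge\vect{z})\geq 0$ for free. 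One shared caveat: the auxiliary point produced by tool (i) ($\vect{z}+\vect{x}\vee\vect{y}$ in your version, $(\vect{x}+\vect{z})\vee\vect{y}$ in the paper's) may leave $[0,1]^{n}$, so your opening remark that $F$ is evaluated only where it is defined holds for the statement but not for either proof; this issue is inherited from \cite{BianLevy17:Non-monotone-Continuous} and is not specific to your argument.
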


\section{Offline Continuous DR-Submodular Maximization}
\label{offline-dr-submod}

In this section, we consider the problem of maximizing a DR-submodular function over a general convex set in the offline setting.
Approximation algorithms \cite{BianMirzasoleiman17:Guaranteed-Non-convex,BianLevy17:Non-monotone-Continuous} have been presented and all of them are adapted variants of the Frank-Wolfe method.
However, those algorithms require that the convex set is down-closed. This structure is crucial in their analyses in order to relate their solution to the optimal solution. Using some property of DR-submodularity (specifically, Lemma~\ref{lem:prop2}),
we show that beyond the down-closed structure, the Frank-Wolf algorithm guarantees an approximation
solution for general convex sets. Below, we present the pseudocode of our variant of the Frank-Wolfe algorithm.

\begin{algorithm}[ht]
\begin{algorithmic}[1]
\STATE Let $\vect{x}^{1} \gets \arg \min_{\vect{x} \in \mathcal{K}} \|\vect{x}\|_{\infty}$.
\FOR{$t = 1$ to $T$}
			%
		\STATE Compute $\vect{v}^{t} \gets \arg \max_{\vect{v} \in \mathcal{K}} \langle \nabla F(\vect{x}^{t-1}), \vect{v} \rangle$
		\STATE Update $\vect{x}^{t} \gets (1 - \eta_t) \vect{x}^{t-1} + \eta_{t} \vect{v}^{t}$ with step-size	$\eta_{t} = \frac{\delta}{t H_{T}}$ where $H_{T}$ is the $T$-th harmonic number and $\delta$ represents the constant $(\ln 3)/ 2$.
\ENDFOR
\RETURN $x^{T}$ 
\end{algorithmic}
\caption{Frank-Wolfe Algorithm}
\label{algo:offline}
\end{algorithm}

Next, we show that during the execution of the algorithm, the following invariant is maintained.

\begin{lemma}		\label{lem:bound-x}
It holds that $1 - x^{t}_{i} \geq e^{- \delta (1 + O(1/\ln^{2}T))} \cdot (1 - x^{1}_{i})$
for every $1 \leq i \leq n$ and every $1 \leq t \leq T$.
\end{lemma}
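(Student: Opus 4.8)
The plan is to unroll the Frank--Wolfe update one coordinate at a time and then estimate the resulting product of shrinkage factors. Fix a coordinate $i$ and an iteration index. Since the update $\vect{x}^{t} = (1-\eta_t)\vect{x}^{t-1} + \eta_t\vect{v}^{t}$ is a convex combination, we have $1 - x^{t}_{i} = (1-\eta_t)(1 - x^{t-1}_{i}) + \eta_t(1 - v^{t}_{i})$, and since $\vect{v}^{t}\in\mathcal{K}\subseteq[0,1]^{n}$ the last term is nonnegative, so
\[
1 - x^{t}_{i} \;\ge\; (1-\eta_t)\,(1 - x^{t-1}_{i}).
\]
Unrolling this from the initial point $\vect{x}^{1}$ gives $1 - x^{t}_{i} \ge \bigl(\prod_{s}(1-\eta_s)\bigr)(1 - x^{1}_{i})$, where $s$ ranges over the iteration indices up to $t$; every factor is strictly positive because $\eta_s \le \eta_1 = \delta/H_T < 1$. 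It therefore remains to lower bound $\prod_s (1-\eta_s)$ uniformly in $t$.

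For this I would pass to logarithms and use the elementary estimate $\ln(1-x) \ge -x - x^{2}$ on $[0,\delta]$ (a routine one-variable check; here $\delta = (\ln 3)/2 \approx 0.55$, and all step sizes lie in this range since $\eta_s \le \delta$). This gives
\[
\prod_{s}(1-\eta_s) \;\ge\; \exp\!\Bigl(-\sum_{s}\eta_s - \sum_{s}\eta_s^{2}\Bigr) \;\ge\; \exp\!\Bigl(-\sum_{s=1}^{T}\eta_s - \sum_{s=1}^{T}\eta_s^{2}\Bigr).
\]
The first sum is exactly $\delta$, which is precisely why the step size is normalized by the harmonic number: $\sum_{s=1}^{T}\eta_s = \frac{\delta}{H_T}\sum_{s=1}^{T}\frac1s = \delta$. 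The second sum is negligible: $\sum_{s=1}^{T}\eta_s^{2} = \frac{\delta^{2}}{H_T^{2}}\sum_{s=1}^{T}\frac{1}{s^{2}} \le \frac{\delta^{2}}{H_T^{2}}\cdot\frac{\pi^{2}}{6}$, and since $H_T = \Omega(\ln T)$ this is $O(1/\ln^{2}T)$. Combining, $\prod_{s}(1-\eta_s) \ge \exp\!\bigl(-\delta - O(1/\ln^{2}T)\bigr) = \exp\!\bigl(-\delta(1 + O(1/\ln^{2}T))\bigr)$, and plugging this into the displayed recursion yields the claimed bound.

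There is no genuinely hard step here; the only two points that deserve attention are (i) checking that the step sizes stay in the interval on which $\ln(1-x)\ge -x-x^{2}$ holds, which follows at once from $\eta_s \le \delta < 1$, and (ii) the bookkeeping that $\sum_s \eta_s^{2} = O(1/\ln^{2}T)$ holds uniformly over $t \le T$ --- this is what forces the $1/\ln^{2}T$ (rather than $1/\ln T$) rate, and is the reason the algorithm uses $\eta_t = \delta/(t H_T)$ so that $\sum_{s=1}^{T}\eta_s$ telescopes exactly to the constant $\delta$.
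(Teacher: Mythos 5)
Your proof is correct and follows essentially the same route as the paper's: the same one-step contraction $1 - x^{t}_{i} \ge (1-\eta_t)(1-x^{t-1}_{i})$ (you derive it via the convex-combination identity, the paper via $v^t_i \le 1$), the same elementary bound $1-u \ge e^{-u-u^{2}}$, and the same evaluation $\sum_s \eta_s = \delta$, $\sum_s \eta_s^{2} = O(1/\ln^{2}T)$. Your version is marginally more careful in verifying the validity range of the logarithmic estimate (the paper states it only for $u < 1/2$, which suffices since $\eta_s \le \delta/H_T$), but there is no substantive difference.
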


\begin{proof}
Fix a dimension $i \in [n]$. We first obtain the following recursion on fixed $x_i$.
\begin{align*}
1 - x^{t}_{i} &= 1 - (1 - \eta_t) x^{t-1}_{i} - \eta_{t} v^{t}_{i}  &\tag{Using the Update step from Algorithm~\ref{algo:offline}} \\
&\geq  1 - (1 - \eta_t)  x^{t-1}_{i} - \eta_{t}  &\tag{$1 \geq v^t_i$}\\
&= (1- \eta_{t}) (1 - x^{t-1}_{i}) \\
&\geq e^{-\eta_{t} - \eta_{t}^{2}} \cdot (1 - x^{t-1}_{i}) &\tag{$1 - u \geq e^{-u - u^{2}}$ for $0 \leq u < 1/2$}
\end{align*}
Using this recursion, we have,
\begin{align*}
1 - x^{t}_{i} &\geq e^{-\sum_{t'=2}^{t} \eta_{t'} -\sum_{t'=2}^{t} \eta_{t'}^{2}} \cdot (1 - x^{1}_{i}) \\
	&\geq e^{-\delta(1 + O(1/\ln^{2}T))} \cdot (1 - x^{1}_{i})
\end{align*}
since $\sum_{t'=2}^{t} \eta_{t'}^{2} = \sum_{t'=2}^{t} \frac{\delta^{2}}{t^{2} H_{T}^{2}} = O(1/\ln^{2}T)$.
\end{proof}

The following lemma was first observed in \cite{FeldmanNaor11:A-unified-continuous} and was generalized
in \cite[Lemma 7]{ChekuriJayram15:On-multiplicative-weight} and \cite[Lemma 3]{BianLevy17:Non-monotone-Continuous}.

\begin{lemma}[\cite{FeldmanNaor11:A-unified-continuous,ChekuriJayram15:On-multiplicative-weight,BianLevy17:Non-monotone-Continuous}]		\label{lem:obs2}
For every $\vect{x}, \vect{y} \in \mathcal{K}$, it holds that $F(\vect{x} \vee \vect{y}) \geq \bigl( 1 - \|\vect{x} \|_{\infty} \bigr) F(\vect{y})$.
\end{lemma}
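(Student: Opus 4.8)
The plan is to reduce the inequality to a one-dimensional concavity argument along a line segment emanating from $\vect{y}$, chosen long enough that the target point $\vect{x}\vee\vect{y}$ sits in its \emph{interior}. Write $\lambda := \|\vect{x}\|_{\infty}$. If $\lambda = 0$ then $\vect{x} = \vect{0}$, so $\vect{x}\vee\vect{y} = \vect{y}$ and the claim is immediate; assume therefore $\lambda \in (0,1]$. Set $\vect{d} := (\vect{x}\vee\vect{y}) - \vect{y}$, so $\vect{d}\geq\vect{0}$ and $d_i = \max\{x_i - y_i,\,0\}$. I would introduce the scalar function $\phi(t) := F(\vect{y} + t\vect{d})$ and show it is well defined, nonnegative, and concave on the interval $[0,1/\lambda]$. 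Granting that, since $1 = (1-\lambda)\cdot 0 + \lambda\cdot\tfrac1\lambda$ is a convex combination, concavity of $\phi$ yields
\[
F(\vect{x}\vee\vect{y}) = \phi(1) \;\geq\; (1-\lambda)\,\phi(0) + \lambda\,\phi(1/\lambda) \;\geq\; (1-\lambda)\,\phi(0) \;=\; \bigl(1 - \|\vect{x}\|_{\infty}\bigr)\,F(\vect{y}),
\]
using $\phi\geq 0$ and $\phi(0) = F(\vect{y})$, $\phi(1) = F(\vect{y}+\vect{d}) = F(\vect{x}\vee\vect{y})$.

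Two facts remain to be checked. First, the domain claim: it suffices to show $\vect{y} + \tfrac1\lambda\vect{d} \in [0,1]^n$, because then, by convexity of the cube and $\vect{y}\in[0,1]^n$, the whole segment $\{\vect{y}+t\vect{d} : t\in[0,1/\lambda]\}$ lies in $[0,1]^n$, so $\phi$ is well defined and $\phi\geq 0$ as $F\geq 0$. The coordinatewise lower bound is clear; for the upper bound I must verify $d_i \leq \lambda(1-y_i)$. If $x_i \leq y_i$ this is trivial since $d_i = 0$; if $x_i > y_i$ it rearranges to $x_i \leq \lambda + (1-\lambda)y_i$, which holds because $x_i \leq \|\vect{x}\|_{\infty} = \lambda$ and $(1-\lambda)y_i \geq 0$. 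Second, concavity of $\phi$: this is exactly the differential DR-submodularity property \eqref{def:DR-sub-diff}. For $0\leq s\leq t\leq 1/\lambda$ we have $\vect{y}+t\vect{d} \geq \vect{y}+s\vect{d}$ (since $\vect{d}\geq\vect{0}$), hence $\nabla F(\vect{y}+t\vect{d}) \leq \nabla F(\vect{y}+s\vect{d})$, and pairing with $\vect{d}\geq\vect{0}$ gives $\phi'(t) = \langle \nabla F(\vect{y}+t\vect{d}),\,\vect{d}\rangle \leq \langle \nabla F(\vect{y}+s\vect{d}),\,\vect{d}\rangle = \phi'(s)$, so $\phi'$ is nonincreasing and $\phi$ is concave. (Equivalently, for twice-differentiable $F$, $\phi''(t) = \vect{d}^{\top}\nabla^{2}F(\vect{y}+t\vect{d})\,\vect{d}\leq 0$ because $\vect{d}\geq\vect{0}$ and all Hessian entries are nonpositive.)

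The only real idea in the argument — and hence the part I expect to be the obstacle — is spotting that the segment from $\vect{y}$ towards $\vect{x}\vee\vect{y}$ can be prolonged all the way to parameter $1/\lambda$ without exiting the unit cube. Concavity of $\phi$ on $[0,1]$ alone is worthless here: writing an interior point as a chord convex combination only produces \emph{upper} bounds on $\phi(1)$, whereas what we need is a \emph{lower} bound, which forces us to express $1$ as a convex combination of $0$ and some point $>1$ where $\phi$ is still defined and nonnegative. The precise bound $d_i \leq \lambda(1-y_i)$ is what makes $1/\lambda$ exactly reachable. Everything else is routine. Finally, although the lemma is phrased for $\vect{x},\vect{y}\in\mathcal{K}$, note the proof uses only $\vect{x},\vect{y}\in[0,1]^{n}$ and that $F$ (together with its DR-submodularity) is defined on all of $[0,1]^{n}$; no down-closedness or membership property of $\mathcal{K}$ is invoked.
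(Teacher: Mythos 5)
Your proof is correct, and it is exactly the standard argument behind this lemma in the cited references (concavity of $F$ along the nonnegative direction $\vect{d}=(\vect{x}\vee\vect{y})-\vect{y}$, plus the key observation that the segment can be prolonged to parameter $1/\|\vect{x}\|_{\infty}$ while staying inside $[0,1]^{n}$, since $d_i\leq\|\vect{x}\|_{\infty}(1-y_i)$). The paper itself does not reprove this lemma --- it only quotes it from \cite{FeldmanNaor11:A-unified-continuous,ChekuriJayram15:On-multiplicative-weight,BianLevy17:Non-monotone-Continuous} --- and your write-up matches the proof given there, with all the needed verifications (well-definedness, nonnegativity of $F$, and monotonicity of $\phi'$ from the gradient form of DR-submodularity) carried out correctly.
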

\medskip 

\begin{theorem}  \label{offline-thm}
Let $\mathcal{K} \subseteq [0,1]^n$ be a convex set and let $F: \mathcal{K} \rightarrow \mathbb{R}$ is a non-monotone $\beta$-smooth DR-submodular function. Let $D$ be the diameter of $\mathcal{K}$. Then Algorithm \ref{algo:offline} yields a solution $\vect{x}^{T} \in \mathcal{K}$ such that the following inequality holds: 
$$
F\bigl(\vect{x}^{T}\bigr)
	\geq
	\left(\frac{1}{3\sqrt{3}} \right) \bigl(1 - \min_{\vect{x} \in \mathcal{K}} \norm{\vect{x}}_{\infty} \bigr) \cdot \max_{\vect{x}^{*} \in \mathcal{K}} F\bigl( \vect{x}^{*} \bigr)
			- O\left( \frac{\beta D^{2}}{ \ln^{2}T} \right).
$$.
\end{theorem}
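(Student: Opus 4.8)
The plan is to analyze the improvement of $F$ along the trajectory $\vect{x}^{0}, \vect{x}^{1}, \ldots, \vect{x}^{T}$ and relate it to the optimum $\vect{x}^{*}$. First I would fix an iteration $t$ and use $\beta$-smoothness (inequality~\eqref{ineq:smooth1}) together with the update rule $\vect{x}^{t} = (1-\eta_{t})\vect{x}^{t-1} + \eta_{t}\vect{v}^{t}$, which gives
\[
F(\vect{x}^{t}) \geq F(\vect{x}^{t-1}) + \eta_{t}\langle \nabla F(\vect{x}^{t-1}), \vect{v}^{t} - \vect{x}^{t-1}\rangle - \frac{\beta}{2}\eta_{t}^{2}\|\vect{v}^{t} - \vect{x}^{t-1}\|^{2},
\]
and since $\|\vect{v}^{t} - \vect{x}^{t-1}\| \leq D$, the error term is at most $\frac{\beta D^{2}}{2}\eta_{t}^{2}$. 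By the choice of $\vect{v}^{t}$ as the linear maximizer, $\langle \nabla F(\vect{x}^{t-1}), \vect{v}^{t}\rangle \geq \langle \nabla F(\vect{x}^{t-1}), \vect{x}^{*}\rangle$, so I can lower bound $\langle \nabla F(\vect{x}^{t-1}), \vect{v}^{t} - \vect{x}^{t-1}\rangle \geq \langle \nabla F(\vect{x}^{t-1}), \vect{x}^{*} - \vect{x}^{t-1}\rangle$.

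Next I would invoke Lemma~\ref{lem:prop2} with $\vect{x} = \vect{x}^{t-1}$ and $\vect{y} = \vect{x}^{*}$ to get $\langle \nabla F(\vect{x}^{t-1}), \vect{x}^{*} - \vect{x}^{t-1}\rangle \geq F(\vect{x}^{t-1} \vee \vect{x}^{*}) + F(\vect{x}^{t-1} \wedge \vect{x}^{*}) - 2F(\vect{x}^{t-1})$, then drop the nonnegative term $F(\vect{x}^{t-1} \wedge \vect{x}^{*}) \geq 0$, and bound $F(\vect{x}^{t-1} \vee \vect{x}^{*})$ from below using Lemma~\ref{lem:obs2}: $F(\vect{x}^{t-1} \vee \vect{x}^{*}) \geq (1 - \|\vect{x}^{t-1}\|_{\infty}) F(\vect{x}^{*})$. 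The crucial point is now to control $1 - \|\vect{x}^{t-1}\|_{\infty} = \min_{i}(1 - x^{t-1}_{i})$ from below: Lemma~\ref{lem:bound-x} gives $1 - x^{t-1}_{i} \geq e^{-\delta(1 + O(1/\ln^{2}T))}(1 - x^{1}_{i}) \geq e^{-\delta(1+o(1))}(1 - \min_{\vect{x}\in\mathcal{K}}\|\vect{x}\|_{\infty})$, since $\vect{x}^{1}$ was chosen to minimize the infinity norm. Writing $C := 1 - \min_{\vect{x}\in\mathcal{K}}\|\vect{x}\|_{\infty}$ and $\mathrm{OPT} := F(\vect{x}^{*})$, this yields the per-step inequality
\[
F(\vect{x}^{t}) \geq F(\vect{x}^{t-1}) + \eta_{t}\bigl( e^{-\delta(1+o(1))} C\cdot \mathrm{OPT} - 2F(\vect{x}^{t-1})\bigr) - \frac{\beta D^{2}}{2}\eta_{t}^{2}.
\]

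Finally I would unroll this recursion. Setting $g := e^{-\delta(1+o(1))} C \cdot \mathrm{OPT}$, the recursion reads $F(\vect{x}^{t}) \geq (1 - 2\eta_{t})F(\vect{x}^{t-1}) + \eta_{t} g - \frac{\beta D^{2}}{2}\eta_{t}^{2}$. With $\eta_{t} = \frac{\delta}{t H_{T}}$ we have $\sum_{t=1}^{T}\eta_{t} = \delta$ and $\prod_{t=1}^{T}(1 - 2\eta_{t}) \approx e^{-2\sum\eta_{t}} = e^{-2\delta}$, while $\sum_{t}\eta_{t}^{2} = O(1/\ln^{2}T)$, so the accumulated smoothness error is $O(\beta D^{2}/\ln^{2}T)$. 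A standard telescoping/induction argument (showing $F(\vect{x}^{t}) \geq \phi(t)\, g - (\text{error})$ for an appropriate decreasing weight $\phi$) then gives $F(\vect{x}^{T}) \geq \frac{1 - e^{-2\delta}}{2}\, g - O(\beta D^{2}/\ln^{2}T)$. Plugging in $\delta = (\ln 3)/2$ makes $e^{-\delta} = 3^{-1/2}$ and $e^{-2\delta} = 1/3$, so the prefactor becomes $\frac{1 - 1/3}{2}\cdot 3^{-1/2} = \frac{1}{3}\cdot\frac{1}{\sqrt{3}} = \frac{1}{3\sqrt{3}}$, and absorbing the $(1+o(1))$ in the exponent into the $O(\cdot)$ error term yields exactly
\[
F(\vect{x}^{T}) \geq \frac{1}{3\sqrt{3}}\, C \cdot \mathrm{OPT} - O\!\left(\frac{\beta D^{2}}{\ln^{2}T}\right).
\]
The main obstacle I anticipate is making the recursion-unrolling rigorous: carefully choosing the inductive weights $\phi(t)$ so that the coefficient of $g$ telescopes to $\frac{1-e^{-2\delta}}{2}$ (this is where the specific $1/t$ shape of $\eta_{t}$, and hence the harmonic-number normalization $H_{T}$, is essential) while simultaneously verifying that the second-order errors genuinely sum to $O(1/\ln^{2}T)$ rather than something larger; the optimization of $\delta$ to get the clean constant $3\sqrt{3}$ is then a one-line calculus exercise on $\frac{(1 - e^{-2\delta})e^{-\delta}}{2}$... er, on $e^{-\delta}\cdot\frac{1-e^{-2\delta}}{2}$, maximized at $e^{-2\delta} = 1/3$.
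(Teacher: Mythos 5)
Your proposal is correct and follows essentially the same route as the paper: smoothness to get a per-step gain, the linear-maximizer property, Lemma~\ref{lem:prop2} plus Lemma~\ref{lem:obs2} with the invariant of Lemma~\ref{lem:bound-x} to lower-bound the gain by $\eta_t(rF(\vect{x}^*) - 2F(\vect{x}^t))$, and then unrolling the recursion for $F(\vect{x}^t) - \tfrac{1}{2}rF(\vect{x}^*)$ (the paper's $h^t$) before optimizing $\delta$. The only cosmetic difference is that you apply the descent lemma with the gradient at $\vect{x}^{t-1}$ directly, whereas the paper evaluates the gradient at the new point and pays an extra Cauchy--Schwarz/Lipschitz correction, which only changes the constant in the $O(\beta D^2/\ln^2 T)$ term.
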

\begin{proof}
Let $\vect{x}^{*} \in \mathcal{K}$ be the maximum solution of $F$.
Let $r =  e^{-\delta(1 + O(1/\ln^{2}T))} \cdot (1 - \max_{i} x^{1}_{i})$. Note that from Lemma~\ref{lem:bound-x}, it follows that $(1 - \|\vect{x}^{t}\|_{\infty}) \geq r$ for every $t$. Next we present a recursive formula in terms of $F(\vect{x}^t)$ and $F(\vect{x})$:
\begin{align*}
&2F\bigl(\vect{x}^{t+1}\bigr) - r F\bigl( \vect{x}^{*} \bigr) \\
&= 2F\bigl(( 1- \eta_{t+1}) \vect{x}^{t} + \eta_{t+1} \vect{v}^{t+1}\bigr) - r F\bigl( \vect{x}^{*} \bigr) & \tag{Using the Update step from Algorithm~\ref{algo:offline}}  \\
\\
&\geq 2F\bigl(\vect{x}^{t}\bigr) - r F\bigl( \vect{x}^{*} \bigr)
		+ 2\eta_{t+1} \langle \nabla F\bigl((1 - \eta_{t+1}) \vect{x}^{t} + \eta_{t+1} \vect{v}^{t+1}\bigr), (\vect{v}^{t+1} - \vect{x}^t) \rangle \\& \qquad  \qquad - \beta (\eta_{t+1})^{2} \| \vect{v}^{t+1} - \vect{x}^t \|^{2} \tag{$\beta$-smoothness as defined in Inequality (\ref{ineq:smooth1})} \\
\\
&= 2F\bigl(\vect{x}^{t}\bigr) - r F\bigl( \vect{x}^{*} \bigr)
		+ 2\eta_{t+1} \langle \nabla F\bigl(\vect{x}^{t}\bigr), (\vect{v}^{t+1} - \vect{x}^t) \rangle & & \\
		& \qquad \qquad + 2\eta_{t+1} \langle \nabla F\bigl((1 - \eta_{t+1}) \vect{x}^{t} + \eta_{t+1} \vect{v}^{t+1}\bigr)
				-  \nabla F\bigl(\vect{x}^{t}\bigr), (\vect{v}^{t+1} - \vect{x}^t) \rangle \\
		 & \qquad  \qquad - \beta (\eta_{t+1})^{2} \| \vect{v}^{t+1} - \vect{x}^t\|^{2}
		\\
\\
&\geq 2F\bigl(\vect{x}^{t}\bigr) - r F\bigl( \vect{x}^{*} \bigr)
		+ 2\eta_{t+1} \langle \nabla F\bigl(\vect{x}^{t}\bigr), (\vect{v}^{t+1} - \vect{x}^t) \rangle & & \\
		& \qquad  \qquad - 2\eta_{t+1} \norm{\nabla F\bigl((1 - \eta_{t+1}) \vect{x}^{t} + \eta_{t+1} \vect{v}^{t+1}\bigr)
				-  \nabla F\bigl(\vect{x}^{t}\bigr)} \norm{(\vect{v}^{t+1} - \vect{x}^t)} \\
		 &\qquad  \qquad - \beta (\eta_{t+1})^{2} \| \vect{v}^{t+1} - \vect{x}^t\|^{2} \tag{Cauchy-Schwarz} \\
\\
&\geq 2F\bigl(\vect{x}^{t}\bigr) - r F\bigl( \vect{x}^{*} \bigr)
		+  2\eta_{t+1} \langle \nabla F\bigl(\vect{x}^{t}\bigr), (\vect{v}^{t+1} - \vect{x}^t) \rangle - 3\beta (\eta_{t+1})^{2} \| \vect{v}^{t+1} - \vect{x}^t \|^{2}
		\tag{$\beta$-smoothness as defined in Inequality (\ref{ineq:smooth2})}  \\
\\
&\geq 2F\bigl(\vect{x}^{t}\bigr) - r F\bigl( \vect{x}^{*} \bigr)
		+  2\eta_{t+1} \langle \nabla F\bigl(\vect{x}^{t}\bigr), \vect{x}^{*} - \vect{x}^{t} \rangle - 3\beta (\eta_{t+1})^{2} \| \vect{v}^{t+1} - \vect{x}^t \|^{2}
		 \tag{definition of $\vect{v}^{t+1}$}  \\
\\
&\geq 2F\bigl(\vect{x}^{t}\bigr) - r F\bigl( \vect{x}^{*} \bigr)
		+  2\eta_{t+1} \left( F\bigl(\vect{x}^{*} \vee \vect{x}^{t}\bigr) - 2F\bigl(\vect{x}^{t}\bigr) \right) - 3\beta (\eta_{t+1})^{2} \| \vect{v}^{t+1} - \vect{x}^t \|^{2}
		 \tag{Lemma \ref{lem:prop2}}  \\
\\
&\geq 2F\bigl(\vect{x}^{t}\bigr) - r F\bigl( \vect{x}^{*} \bigr)
		+  2\eta_{t+1} \left( r F\bigl(\vect{x}^{*}\bigr) - 2 F\bigl(\vect{x}^{t}\bigr) \right) - 3\beta (\eta_{t+1})^{2} \| \vect{v}^{t+1} - \vect{x}^t \|^{2}
		\tag{Lemma \ref{lem:obs2}} \\
\\
%
&\geq \left( 1 - 2\eta_{t+1}\right) \left( 2F\bigl(\vect{x}^{t}\bigr) - r F\bigl( \vect{x}^{*} \bigr)  \right) - 3\beta (\eta_{t})^{2} D^{2},
\end{align*}
where $D$ is the diameter of $\mathcal{K}$.

Let $h^{t} = 2F\bigl(\vect{x}^{t}\bigr) - r F\bigl( \vect{x}^{*} \bigr)$. By the previous inequality and the choice of $\eta_{t}$, we have
$$ h^{t+1} \geq \left( 1 - 2\eta_{t+1}\right) h^{t} - 3\beta (\eta_{t})^{2} D^{2}
	= \left( 1 - 2\eta_{t+1} \right) h^{t} - O\biggl( \frac{\beta \delta^{2}D^{2}}{t^{2} \ln^{2}T} \biggr) .
$$
where we used the facts that $H_{T} = O(\ln T)$. Therefore,
\begin{align*}
h^{T} &\geq  \prod_{t=2}^{T} \left( 1 - 2\eta_{t} \right) h^{1} - O\biggl( \frac{\beta \delta^{2}D^{2}}{\ln^{2}T} \biggr) \sum_{t=1}^{T} \frac{1}{t^{2}} & & \\
&\geq e^{-2\sum_{t=2}^{T} \eta_{t} - 4\sum_{t=2}^{T} \eta_{t}^{2} } \cdot h^{1} - O\biggl( \frac{\beta \delta^{2}D^{2}}{t^{2} \ln^{2}T} \biggr) & \text{ since } (1 - u) \geq e^{-u - u^{2}} \text{ for } 0 \leq u < 1/2& \\
&= e^{-2\delta(1 + O(1/\ln^{2}T))} \cdot h^{1} - O\left(\frac{\beta \delta^{2} D^{2}}{ \ln^{2}T} \right)
& \text{ since } \sum_{t=2}^{T} \eta_{t}^{2} = \sum_{t=2}^{T} \frac{\delta^{2}}{t^{2} H_{T}^{2}} = O(1/\ln^{2}T).
\end{align*}

Hence,
\begin{align*}
2F\bigl(\vect{x}^{T}\bigr) - r F\bigl( \vect{x}^{*} \bigr)
 &\geq e^{-2\delta(1 + O(1/\ln^{2}T))} \left( 2F\bigl(\vect{x}^{1}\bigr) - r F\bigl( \vect{x}^{*} \bigr) \right)  - O \left(\frac{\beta \delta^{2} D^{2}}{ \ln^{2}T} \right)
\end{align*}
which implies,
\begin{align*}
F\bigl(\vect{x}^{T}\bigr) &\geq \frac{r}{2} \left( 1 - e^{-2\delta(1 + O(1/\ln^{2}T))} \right) F\bigl( \vect{x}^{*} \bigr) - \frac{6\beta\delta^{2} D^{2}}{ \ln^{2}T} \\
&= \frac{e^{-\delta(1 + O(1/\ln^{2}T))} \cdot (1 - e^{-2\delta(1 + O(1/\ln^{2}T))})}{2} (1 - \max_{i} x^{1}_{i}) F\bigl( \vect{x}^{*} \bigr) - O\left( \frac{\beta \delta^{2} D^{2}}{ \ln^{2}T} \right).
\end{align*}
Note that for $T$ sufficiently large, $O(1/\ln^{2}T) \ll 1$. By the choice $\delta = \left(\frac{\ln 3}{2}\right)$, we get
$$
F\bigl(\vect{x}^{T}\bigr) \geq \left(\frac{1}{3\sqrt{3}}\right)  (1 - \max_{i} x^{1}_{i}) F\bigl( \vect{x}^{*} \bigr) - O\left( \frac{\beta D^{2}}{ \ln^{2}T} \right)
$$
and the theorem follows.

\end{proof}

\begin{corollary}
If $\vect{0} \in {\cal K}$, then the guarantee in Theorem~\ref{offline-thm} can be written as: 
\begin{align*}
	F\bigl(\vect{x}^{T}\bigr) \geq \left(\frac{1}{3\sqrt{3}} \right) \max_{\vect{x}^{*} \in \mathcal{K}} F\bigl( \vect{x}^{*} \bigr) - O\left( \frac{\beta n}{ \ln^{2}T} \right).
\end{align*}
where the starting point  $\vect{x}^{1} = \vect{0}$ and the diameter $D \leq \sqrt{n}$.
\end{corollary}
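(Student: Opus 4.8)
The plan is to obtain the Corollary as a direct specialization of Theorem~\ref{offline-thm}, using two observations: that the starting point collapses to the origin, and that the diameter of $\mathcal{K}$ is at most $\sqrt{n}$. First I would note that when $\vect{0} \in \mathcal{K}$, the initialization $\vect{x}^{1} \gets \arg\min_{\vect{x} \in \mathcal{K}} \|\vect{x}\|_{\infty}$ in Algorithm~\ref{algo:offline} yields $\vect{x}^{1} = \vect{0}$, since $\|\vect{0}\|_{\infty} = 0$ is the smallest value a norm can take. In particular $\min_{\vect{x} \in \mathcal{K}} \|\vect{x}\|_{\infty} = \max_{i} x^{1}_{i} = 0$, so the prefactor $\bigl(1 - \min_{\vect{x} \in \mathcal{K}} \|\vect{x}\|_{\infty}\bigr)$ appearing in the bound of Theorem~\ref{offline-thm} equals $1$. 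Substituting this gives
$$
F\bigl(\vect{x}^{T}\bigr) \geq \left(\frac{1}{3\sqrt{3}}\right) \max_{\vect{x}^{*} \in \mathcal{K}} F\bigl(\vect{x}^{*}\bigr) - O\left(\frac{\beta D^{2}}{\ln^{2} T}\right).
$$

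Second I would bound the diameter. Since we assume without loss of generality that $\mathcal{K} \subseteq [0,1]^{n}$, any $\vect{x}, \vect{y} \in \mathcal{K}$ satisfy $\|\vect{x} - \vect{y}\|^{2} = \sum_{i=1}^{n} (x_{i} - y_{i})^{2} \leq n$, hence $D = \sup_{\vect{x},\vect{y} \in \mathcal{K}} \|\vect{x} - \vect{y}\| \leq \sqrt{n}$. Replacing $D^{2}$ by $n$ in the error term turns $O(\beta D^{2}/\ln^{2} T)$ into $O(\beta n/\ln^{2} T)$, which is exactly the claimed guarantee.

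There is no real obstacle here: the corollary is a plug-in of $\vect{0} \in \mathcal{K}$ and $D \leq \sqrt{n}$ into the already-established theorem. The only point worth double-checking is that the quantity $\min_{\vect{x} \in \mathcal{K}} \|\vect{x}\|_{\infty}$ used in the statement of Theorem~\ref{offline-thm} coincides with $\|\vect{x}^{1}\|_{\infty} = \max_{i} x^{1}_{i}$ from the proof of Lemma~\ref{lem:bound-x}; by the definition of $\vect{x}^{1}$ in the algorithm it does, so the substitution is legitimate and the statement follows.
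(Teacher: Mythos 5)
Your proposal is correct and follows exactly the route the paper intends (the paper gives no separate proof, only the two hints $\vect{x}^{1}=\vect{0}$ and $D\leq\sqrt{n}$ in the statement): since $\vect{0}\in\mathcal{K}$ forces $\min_{\vect{x}\in\mathcal{K}}\|\vect{x}\|_{\infty}=0$, the multiplicative factor in Theorem~\ref{offline-thm} becomes $1/(3\sqrt{3})$, and $\mathcal{K}\subseteq[0,1]^{n}$ gives $D^{2}\leq n$ in the additive term. Nothing is missing.
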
 
Note that inclusion of $\vect{0}$  in ${\cal K}$ does not necessarily implies that ${\cal K}$ is a down-closed polytope. 

\section{Online Continuous DR-Submodular Maximization}
\label{online-dr-submod}

We consider the DR-submodular maximization problem over  \emph{down-closed} convex sets
in the online setting. It has been observed that Stochastic Gradient Ascent performs well in practice  for
DR-submodular maximization (e.g., see Section~\ref{sec:exp}). In this section,
we establish a provable guarantee of the Gradient Ascent method by exploring useful properties
of DR-submodularity. The result can be seen as a theoretical evidence of the performance of the method.  
Note that the algorithm requires only the stochastic gradient.
Below, we present the pseudocode of our variant of the Stochastic Gradient Ascent algorithm.

\begin{algorithm}[ht]
\begin{algorithmic}[1]
\STATE $\vect{x}^{1}$ is some arbitrary point in the convex set $\mathcal{K}$.
\FOR{$t = 1$ to $T$}
		\STATE Play $\vect{x}^{t}$ and receive reward $F^t(\vect{x}^{t})$.
		\STATE Sample $\vect{g}^{t}$ such that $\E\bigl[ \vect{g}^{t} | \vect{x}^{t}\bigr] = \nabla F^t(\vect{x}^{t})$.
		\STATE Update $\vect{x}^{t+1} = \textrm{Proj}_{\mathcal{K}} \left( \vect{x}^{t} + \eta_t \vect{g}^{t} \right)$
			where $\eta_t = \left(\frac{D}{G\sqrt{t}}\right)$.
\ENDFOR
\end{algorithmic}
\caption{Online Stochastic Gradient Ascent ($\mathcal{K}$, $\eta_{t}$)}
\label{OGA}
\end{algorithm}

\begin{theorem} \label{thm:online}
Let $\mathcal{K} \subset [0,1]^{n}$ be a down-closed convex body
and assume that $F^t: \mathcal{K} \rightarrow \mathbb{R}$ are DR-submodular functions for $t = 1, 2, 3, \ldots, T$.
Let $D$ be the diameter of the convex set $\mathcal{K}$ and
and $G = \sup_{1 \leq t \leq T} \norm{\vect{g}^{t}}$.
Then for $\eta_t = \frac{D}{G\sqrt{t}}$, we have
$$
\frac{1}{4} \cdot \frac{1}{T} \sum \limits_{t = 1}^{T} F^t(\vect{x}^*) -  \frac{1}{T}  \sum \limits_{t = 1}^{T} \E\bigl[ F^t(\vect{x}^{t})\bigr] \leq  O\biggl( \frac{DG}{\sqrt{T}}\biggr).
$$
\end{theorem}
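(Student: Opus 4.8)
The plan is to combine the textbook regret analysis of projected (stochastic) gradient ascent with the variational inequality for DR-submodular functions from Lemma~\ref{lem:prop2}. First I would run the standard argument: for the update $\vect{x}^{t+1} = \textrm{Proj}_{\mathcal{K}}(\vect{x}^{t} + \eta_t \vect{g}^{t})$ and any fixed $\vect{z} \in \mathcal{K}$, the Pythagorean inequality gives $\|\vect{x}^{t+1} - \vect{z}\|^2 \le \|\vect{x}^{t} - \vect{z}\|^2 + 2\eta_t \langle \vect{g}^{t}, \vect{x}^{t} - \vect{z}\rangle + \eta_t^2 \|\vect{g}^{t}\|^2$, hence $\langle \vect{g}^{t}, \vect{z} - \vect{x}^{t}\rangle \le \tfrac{1}{2\eta_t}\bigl(\|\vect{x}^{t} - \vect{z}\|^2 - \|\vect{x}^{t+1} - \vect{z}\|^2\bigr) + \tfrac{\eta_t}{2}\|\vect{g}^{t}\|^2$. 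Summing over $t$, bounding $\|\vect{x}^{t}-\vect{z}\| \le D$ and $\|\vect{g}^{t}\| \le G$, and plugging in $\eta_t = D/(G\sqrt{t})$ (the weighted telescoping sum $\sum_t \tfrac{1}{2\eta_t}(\|\vect{x}^{t}-\vect{z}\|^2 - \|\vect{x}^{t+1}-\vect{z}\|^2)$ is controlled because $1/\eta_t$ is increasing, and $\sum_t \eta_t \le O(D\sqrt{T}/G)$) yields $\sum_{t=1}^{T} \langle \vect{g}^{t}, \vect{z} - \vect{x}^{t}\rangle \le O(DG\sqrt{T})$. Taking the conditional expectation given $\vect{x}^{t}$ and using $\E[\vect{g}^{t}\mid\vect{x}^{t}] = \nabla F^t(\vect{x}^{t})$ (valid since $\vect{x}^{t}$ is fixed before $\vect{g}^{t}$ is sampled) turns this into $\sum_{t=1}^{T} \E\langle \nabla F^t(\vect{x}^{t}), \vect{z} - \vect{x}^{t}\rangle \le O(DG\sqrt{T})$; I then take $\vect{z} = \vect{x}^{*} := \arg\max_{\vect{x}\in\mathcal{K}} \sum_{t=1}^{T} F^t(\vect{x})$.

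The second ingredient converts each inner product into function values. By Lemma~\ref{lem:prop2}, $\langle \nabla F^t(\vect{x}^{t}), \vect{x}^{*} - \vect{x}^{t}\rangle \ge F^t(\vect{x}^{*} \vee \vect{x}^{t}) + F^t(\vect{x}^{*} \wedge \vect{x}^{t}) - 2F^t(\vect{x}^{t}) \ge F^t(\vect{x}^{*} \vee \vect{x}^{t}) - 2F^t(\vect{x}^{t})$, the last step dropping the nonnegative term $F^t(\vect{x}^{*} \wedge \vect{x}^{t})$ (down-closedness of $\mathcal{K}$ ensures $\vect{x}^{*} \wedge \vect{x}^{t} \in \mathcal{K}$, and $F^t \ge 0$). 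The crux is to show $F^t(\vect{x}^{*} \vee \vect{x}^{t})$ is a constant fraction of $F^t(\vect{x}^{*})$: here I would invoke Lemma~\ref{lem:obs2} (and, if needed, Lemma~\ref{lem:prop1}) together with down-closedness — this is precisely the structure that allows the iterate to be related to $\vect{x}^{*}$ via the lattice operations and rescaling — to derive a bound of the form $F^t(\vect{x}^{*} \vee \vect{x}^{t}) + F^t(\vect{x}^{*} \wedge \vect{x}^{t}) \ge \tfrac12 F^t(\vect{x}^{*})$, so that $\langle \nabla F^t(\vect{x}^{t}), \vect{x}^{*} - \vect{x}^{t}\rangle \ge \tfrac12 F^t(\vect{x}^{*}) - 2F^t(\vect{x}^{t})$.

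Finally I would glue the two pieces together: summing the last inequality over $t$, taking expectations, and comparing with the regret bound gives $\tfrac12 \sum_{t=1}^{T} F^t(\vect{x}^{*}) - 2\sum_{t=1}^{T} \E\bigl[F^t(\vect{x}^{t})\bigr] \le \sum_{t=1}^{T} \E\langle \nabla F^t(\vect{x}^{t}), \vect{x}^{*} - \vect{x}^{t}\rangle \le O(DG\sqrt{T})$. Dividing by $2T$ gives exactly $\tfrac14 \cdot \tfrac1T \sum_{t=1}^{T} F^t(\vect{x}^{*}) - \tfrac1T \sum_{t=1}^{T} \E\bigl[F^t(\vect{x}^{t})\bigr] \le O(DG/\sqrt{T})$, which is the claim. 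The factor $\tfrac14$ is the product of the $\tfrac12$ from the DR-submodular inequality and the $\tfrac12$ coming from the coefficient $2$ on $F^t(\vect{x}^{t})$.

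I expect the genuine difficulty to be entirely in the structural step of the second paragraph — extracting a constant-factor lower bound on $F^t(\vect{x}^{*} \vee \vect{x}^{t})$ in terms of $F^t(\vect{x}^{*})$ — because that is the only place the down-closedness of $\mathcal{K}$ and the non-monotone DR-submodular identities (Lemmas~\ref{lem:prop1} and~\ref{lem:obs2}) genuinely enter, and one has no a priori control on how ``large'' the iterate $\vect{x}^{t}$ is inside $\mathcal{K}$; some care is needed so that the argument does not silently require $\|\vect{x}^{t}\|_{\infty}$ to be bounded away from $1$. By contrast, the gradient-ascent regret computation is routine, and the extension to unbiased stochastic gradients is free, handled entirely by the conditional-expectation step (with the time-varying step size $\eta_t = D/(G\sqrt{t})$ responsible for the $O(1/\sqrt{T})$ rate).
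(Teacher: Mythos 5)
Your first paragraph (the textbook projected-gradient regret bound against the fixed comparator $\vect{x}^{*}$, plus the conditional-expectation step for stochastic gradients) matches the paper and is fine. The gap is exactly where you suspected it would be, and it is fatal to the route you chose: the inequality you hope to extract in the second paragraph, $F^t(\vect{x}^{*}\vee\vect{x}^{t}) + F^t(\vect{x}^{*}\wedge\vect{x}^{t}) \ge \tfrac12 F^t(\vect{x}^{*})$, is false for general non-monotone DR-submodular $F^t$ and arbitrary iterates $\vect{x}^{t}$. Take $F$ to be the multilinear extension of the cut function of a single edge on ground set $\{1,2\}$ (so $F(0,0)=F(1,1)=0$, $F(1,0)=F(0,1)=1$), $\mathcal{K}=[0,1]^2$ (down-closed), $\vect{x}^{t}=(1,0)$ and $\vect{x}^{*}=(0,1)$: then $F(\vect{x}^{*}\vee\vect{x}^{t})+F(\vect{x}^{*}\wedge\vect{x}^{t})=0$ while $F(\vect{x}^{*})=1$. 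Lemma~\ref{lem:obs2} cannot rescue this because it degrades with $\|\vect{x}^{t}\|_{\infty}$, which is not bounded away from $1$ along the trajectory, and Lemma~\ref{lem:prop1} does not imply your two-term bound either --- it is intrinsically a \emph{four}-term inequality.

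What the paper actually does is keep all four terms of Lemma~\ref{lem:prop1} (applied with $\vect{x}=\vect{z}=\vect{x}^{t}$, $\vect{y}=\vect{x}^{*}$), which introduces the auxiliary point $\vect{z}^{t}=(\vect{x}^{*}\vee\vect{x}^{t})-\vect{x}^{t}$; down-closedness is used only to ensure $\vect{z}^{t}\in\mathcal{K}$ (since $\vect{z}^{t}\le\vect{x}^{*}$). Applying Lemma~\ref{lem:prop2} twice then bounds $\tfrac14\sum_t F^t(\vect{x}^{*})-\sum_t \E[F^t(\vect{x}^{t})]$ by $\tfrac14\E\bigl[\sum_t\bigl(\langle\vect{g}^{t},\vect{x}^{*}-\vect{x}^{t}\rangle+\langle\vect{g}^{t},\vect{z}^{t}-\vect{x}^{t}\rangle\bigr)\bigr]$, so in addition to your fixed-comparator regret one must bound the regret against the \emph{time-varying} comparator $\vect{z}^{t}$. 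This is the genuinely new piece: the telescoping breaks because the target moves, and the paper repairs it with the drift estimate $\|\vect{z}^{t}-\vect{z}^{t-1}\|^2\le 2\|\vect{x}^{t}-\vect{x}^{t-1}\|^2\le 2\eta_{t-1}^2G^2$, whose accumulated cost $\sum_t \eta_{t-1}^2G^2/\eta_t = O(DG\sqrt{T})$ is absorbed into the regret. The $\tfrac14$ then comes from dividing the four-term inequality by $4$, not from a product of two factors of $\tfrac12$ as you describe. Your proposal is missing both the moving-comparator term and its drift analysis, so as written it does not yield the theorem.
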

\begin{proof}
Let $\vect{z}$ be some arbitrary point in $\mathcal{K}$. Then for every $t$ we have that
\begin{align*}
	\norm{ \vect{x}^{t+1} - \vect{z} }^2 &= \norm{\textrm{Proj}_{\mathcal{K}} (\vect{x}^{t} + \eta^t \vect{g}^t) - \vect{z} }^2  &\tag{Using the Update step in Algorithm~\ref{OGA}}\\
	&\leq \norm{ \vect{x}^{t} + \eta_t \vect{g}^t  - \vect{z} }^2  \tag{by Pythagorean inequality} \\
	&=  \norm{  \vect{x}^{t} - \vect{z}}^2 + \eta^2_t \norm{\vect{g}^t }^2 - 2 \eta_t \langle \vect{g}^t, \vect{z} - \vect{x}^{t}\rangle.
\end{align*}
Rearranging the last inequality and note that $\norm{\vect{g}^t }^2 \leq G^{2}$, we have
\begin{align}	\label{eq:grad-nabla}
\langle \vect{g}^t, \vect{z} - \vect{x}^{t}\rangle \leq \frac{ \norm{  \vect{x}^{t} - \vect{z}}^2 - \norm{ \vect{x}^{t+1} - \vect{z} }^2 +  \eta^2_t G^2 }{2 \eta_t }
\end{align}

Define $\vect{z}^{t} := (\vect{x}^* \vee \vect{x}^{t}) - \vect{x}^{t}$ for every $t$. As $\mathcal{K}$ is down-closed,
$\vect{z}^{t} \in \mathcal{K}$.
We have
\begin{align}
& \frac{1}{4} \sum \limits_{t = 1}^T F^t(\vect{x}^*) - \sum \limits_{t = 1}^T \E\bigl[ F^t(\vect{x}^{t})\bigr] \notag \\
&\leq \frac{1}{4} \E\biggl[ \sum \limits_{t = 1}^T  \left(    F^t(\vect{x}^* \vee \vect{x}^{t})   + F^t(\vect{x}^* \wedge \vect{x}^{t}) + F^t(\vect{z}^{t} \vee \vect{x}^{t}) + F^t(\vect{z}^{t} \wedge \vect{x}^{t}) \right) - 4 \sum \limits_{t = 1}^T  F^t(\vect{x}^{t}) \biggr]  \tag{Using Lemma~\ref{lem:prop1}} \\
&= \frac{1}{4} \E\biggl[ \sum \limits_{t = 1}^T  \left(    F^t(\vect{x}^* \vee \vect{x}^{t})   + F^t(\vect{x}^* \wedge \vect{x}^{t}) + F^t(\vect{z}^{t} \vee \vect{x}^{t}) + F^t(\vect{z}^{t} \wedge \vect{x}^{t})  - 4 F^t(\vect{x}^{t}) \right)  \biggr]  \notag \\
&= \frac{1}{4} \E\biggl[ \sum \limits_{t = 1}^T  \left(    F^t(\vect{x}^* \vee \vect{x}^{t})   + F^t(\vect{x}^* \wedge \vect{x}^{t}) - 2F^t(\vect{x}^{t}) + F^t(\vect{z}^{t} \vee \vect{x}^{t}) + F^t(\vect{z}^{t} \wedge \vect{x}^{t})  - 2F^t(\vect{x}^{t}) \right)  \biggr] \notag \\
&\leq \frac{1}{4} \E \biggl[ \sum \limits_{t = 1}^T  \left( \langle \vect{g}^t, \vect{x}^* - \vect{x}^{t} \rangle
	+   \langle \vect{g}^t, \vect{z}^{t} - \vect{x}^{t} \rangle   \right)  \biggr].
	 \label{eq:online-crucial}
\end{align}
The last inequality is due to Lemma~\ref{lem:prop2}; and $\E\bigl[ \vect{g}^{t} | \vect{x}^{t}\bigr] = \nabla F^t(\vect{x}^{t})$;
and linearity of expectation.

We bound the first term in \eqref{eq:online-crucial}. By Inequality (\ref{eq:grad-nabla}), we have
\begin{align}		
\sum \limits_{t = 1}^T  & \langle \vect{g}^t, \vect{x}^* - \vect{x}^{t} \rangle
\leq \sum \limits_{t = 1}^T \frac{ \norm{  \vect{x}^{t} - \vect{x}^*}^2 - \norm{ \vect{x}^{t+1} - \vect{x}^* }^2 +  \eta^2_t G^2 }{2 \eta_t }  \notag \\
&\leq  \sum \limits_{t = 2}^T \norm{ \vect{x}^{t} - \vect{x}^* }^2 \left( \frac{1}{2\eta_t} - \frac{1}{2\eta_{t-1}} \right) + \frac{1}{2\eta_1}  \norm{ \vect{x}^1 - \vect{x}^*}^2 +  \sum \limits_{t = 1}^T \frac{\eta_t}{2} G^2   \notag \\
&\leq  \norm{ \vect{x}^{t} - \vect{x}^*}^2 \left(\frac{G}{D}\right)  \sum \limits_{t = 2}^T \left( \frac{\sqrt{t}}{2} - \frac{\sqrt{t-1}}{2} \right) + \frac{1}{2\eta_1}  \norm{ \vect{x}^1 - \vect{x}^*}^2 +  \sum \limits_{t = 2}^T  \frac{\eta_t}{2} G^2  \tag{replacing $\eta_t = \frac{D}{G \sqrt{2}}$} \\
&\leq  DG \sum \limits_{t = 2}^T  \frac{1}{4\sqrt{t-1}}
	+ \frac{DG}{2}
	+ \frac{DG}{2} \sum \limits_{t = 2}^T  \frac{1}{\sqrt{t}}  \notag \\
&= O\bigl(DG\sqrt{T}\bigr).
\label{eq:online-crucial-1}
\end{align}

We are now bounding the second term in (\ref{eq:online-crucial}). Observe that
\begin{align}	\label{eq:z}
\norm{ \vect{z}^{t} - \vect{z}^{t-1} }^{2}
&= \norm{ \bigl( \vect{x}^{*} \vee \vect{x}^{t} \bigr) -  \vect{x}^{t} - \bigl( \vect{x}^{*} \vee \vect{x}^{t-1} \bigr) + \vect{x}^{t-1}  }^{2} \notag  \\
&= \norm{ \bigl( \vect{x}^{*} \vee \vect{x}^{t} \bigr) - \bigl( \vect{x}^{*} \vee \vect{x}^{t-1} \bigr) - \bigl( \vect{x}^{t} - \vect{x}^{t-1} \bigr) }^{2} \notag \\
&\leq 2 \norm{ \vect{x}^{t} - \vect{x}^{t-1} }^{2} \leq 2\eta^{2}_{t-1} \norm{\vect{g}^{t-1}}^2
\leq 2\eta^{2}_{t-1} G^2,
\end{align}
where the last inequality follows the algorithm and the Cauchy-Schwarz inequality.
Now we have
\begin{align}	\label{eq:online-crucial-2}
&\sum \limits_{t = 1}^T  \langle \vect{g}^t, \vect{z}^{t} - \vect{x}^{t} \rangle
\leq \sum \limits_{t = 1}^T  \frac{ \norm{  \vect{x}^{t} - \vect{z}^{t}}^2 - \norm{ \vect{x}^{t+1} - \vect{z}^{t} }^2 +  \eta^2_t G^2 }{2 \eta_t } 	\notag \\
&\leq \sum \limits_{t = 1}^T  \frac{ \norm{  \vect{x}^{t} - \vect{z}^{t-1}}^2 + \norm{  \vect{z}^{t} - \vect{z}^{t-1}}^2 - \norm{ \vect{x}^{t+1} - \vect{z}^{t} }^2 +  \eta^2_t G^2 }{2 \eta_t }  \notag \\
&\leq  \sum \limits_{t = 2}^T \biggl( \frac{1}{2\eta_t} - \frac{1}{2\eta_{t-1}} \biggr)  \norm{ \vect{x}^{t} - \vect{z}^{t-1}}^2
	+ \frac{\norm{ \vect{x}^{1} - \vect{z}^{1}}^2}{2\eta_1}
	+ \sum \limits_{t = 1}^T  \frac{ \norm{  \vect{z}^{t} - \vect{z}^{t-1} }^2}{2 \eta_t }
	+  \sum \limits_{t = 2}^T \frac{\eta_t G^2}{2}    \notag \\
&\leq \sum \limits_{t = 2}^T \biggl( \frac{1}{2\eta_t} - \frac{1}{2\eta_{t-1}} \biggr) D^2
	+ \frac{1}{2\eta_1}   D^2
	+  \sum \limits_{t = 2}^T \biggl( \frac{\eta^{2}_{t-1}}{\eta_{t}} + \frac{\eta_t }{2} \biggr) G^2  \notag \\
&\leq  DG\sum \limits_{t = 2}^T \biggl( \frac{\sqrt{t}}{2} - \frac{\sqrt{t-1}}{2} \biggr)
	+ \frac{DG}{2}
	+  3\sum \limits_{t = 2}^T \eta_t  G^2  \notag \\
&\leq  DG \frac{\sqrt{T}}{2}
	+ \frac{DG}{2}
	+  3DG \sum \limits_{t = 2}^T \frac{1}{\sqrt{t}} \notag \\
&= O\bigl(DG\sqrt{T}\bigr).
\end{align}
The first inequality follows from Inequality (\ref{eq:grad-nabla}). The fourth inequality is due to Inequality (\ref{eq:z}). The fifth inequality holds since $\eta^{2}_{t-1} \leq 2 \eta^{2}_{t}$. The theorem follows from the Inequalities (\ref{eq:online-crucial}), (\ref{eq:online-crucial-1}) and (\ref{eq:online-crucial-2}).
\end{proof}

\section{Experiments} 	\label{sec:exp}

In this section, we validate offline and online algorithms for non-monotone DR submodular optimization on both, the real-world and the synthetic datasets. Our experiments are broadly classified into following three categories:
\begin{enumerate}
\item We compare our \emph{offline} algorithm (from Section~\ref{offline-dr-submod}) against two previous known algorithms for maximizing non-monotone DR-submodular function over \emph{down-closed} polytopes mentioned in~\cite{BianLevy17:Non-monotone-Continuous}. Recall that our algorithm applies to a more general setting where the convex set is not required to be down-closed.
\item Next, we show the performance of our \emph{offline} algorithm  (from Section~\ref{offline-dr-submod}) for maximizing non-monotone DR-submodular function over \emph{general} polytopes. Recall that no previous algorithm was known to have performance guarantees for this problem.
\item Finally, we show the performance of our \emph{online} algorithm (from Section~\ref{online-dr-submod}) for maximizing non-montone DR-submodular function over \emph{down-closed} polytopes.
\end{enumerate}
All experiments  are performed in MATLAB using CPLEX optimization tool on MAC OS version 10.14. 

\subsection{Offline Algorithm over Down-closed Polytopes}

Here, we benchmark the performance of our variant of the Frank-Wolfe algorithm from Section~\ref{offline-dr-submod} against the previous known two algorithms for maximizing continuous DR submodular function over \emph{down-closed} polytopes mentioned in~\cite{BianLevy17:Non-monotone-Continuous}. We considered QUADPROGIP, which is a global solver for non-convex quadratic programming, as a baseline.  We run all the algorithms for 100 iterations. All the results are the average of 20 repeated experiments. 
For the sake of completion, we describe below the problem and different settings used.
We follow closely the experimental settings from~\cite{BianLevy17:Non-monotone-Continuous}, and adapted their source codes to our algorithms.

\subsubsection{Quadratic Programming}

As a state-of-the-art global solver, we used QUADPROGIP to find the global optimum which is  used to calculate the approximation ratios. Our problem instances are synthetic quadratic objectives with down-closed polytope constraints, i.e.,
$$f(x) = \frac{1}{2} \vect{x}^{\top} \vect{H} \vect{x} + \vect{h}^{\top} \vect{x} + c$$
and
$$\mathcal{K} = \{ \vect{x} \in \mathbb{R}^n_{+} | \vect{A} \vect{x} \leq \vect{b}, \vect{x} \leq \vect{u}, \vect{A} \in \mathbb{R}_+^{m \times n}, \vect{b} \in \mathbb{R}_+^{m} \}.$$
Note that in previous sections, we have assumed w.l.o.g that $\mathcal{K} \subseteq [0,1]^{n}$.   By scaling our results hold as well for the general box constraint $\vect{x} \leq \vect{u}$, provided the entries of $u$ are upper bounded by a constant.

%
Both objective and constraints were randomly generated, using the following two ways:

\medskip
\noindent \textbf{Uniform Distribution}:  $\vect{H} \in \mathbb{R}^{n \times n}$
is a symmetric matrix with uniformly distributed entries in $[-1, 0]$; $\vect{A} \in \mathbb{R}^{m \times n}$ has uniformly distributed entries in $[\mu, \mu + 1]$, where $\mu$ = 0.01 is a small positive constant in order to make entries of $\vect{A}$ strictly positive for down-closed polytope.

\begin{figure}[h]
\centerline {
	\parbox{0.33\textwidth}{\includegraphics [width=0.33\textwidth] {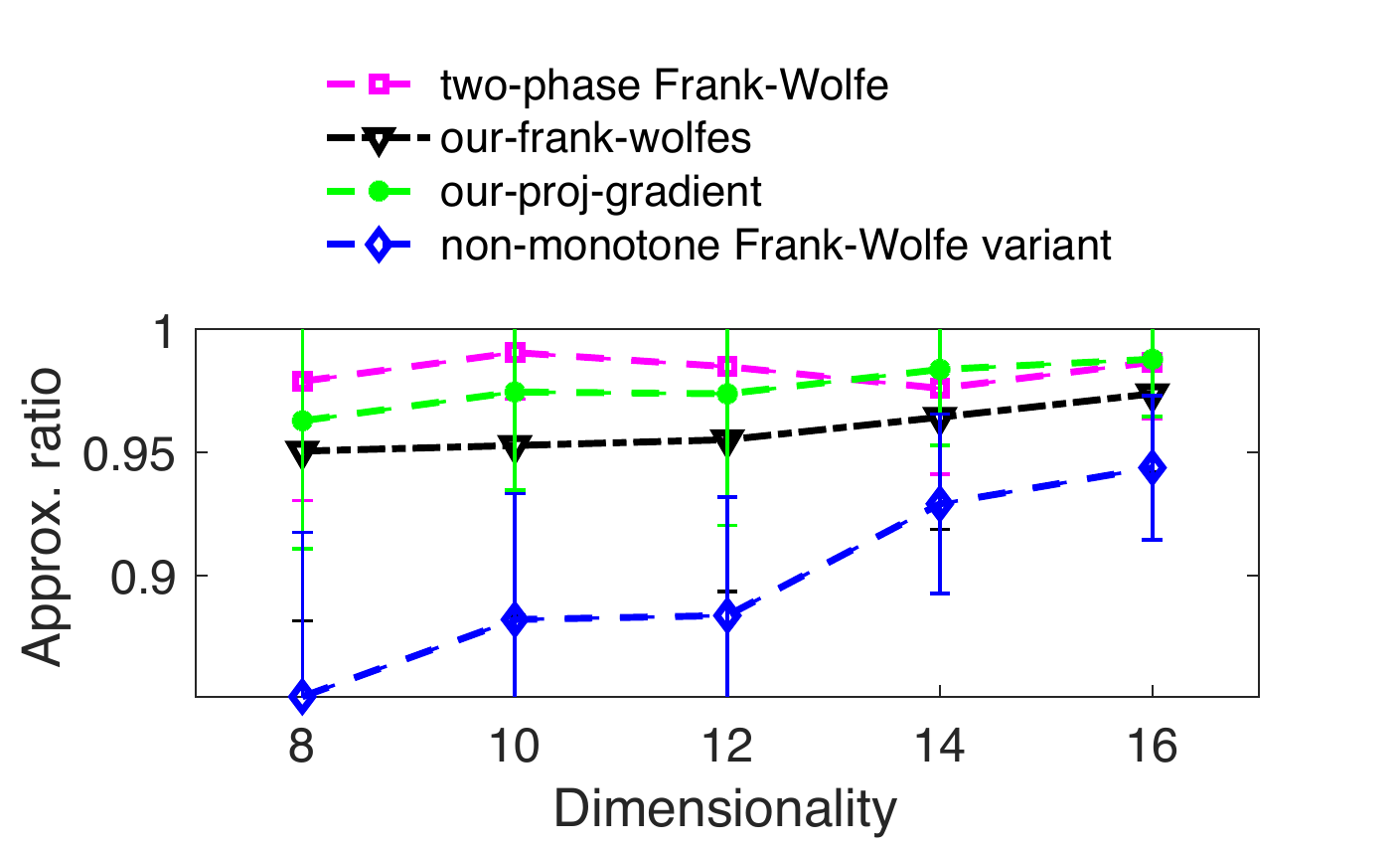}}
	\parbox{0.33\textwidth}{\includegraphics [width=0.33\textwidth] {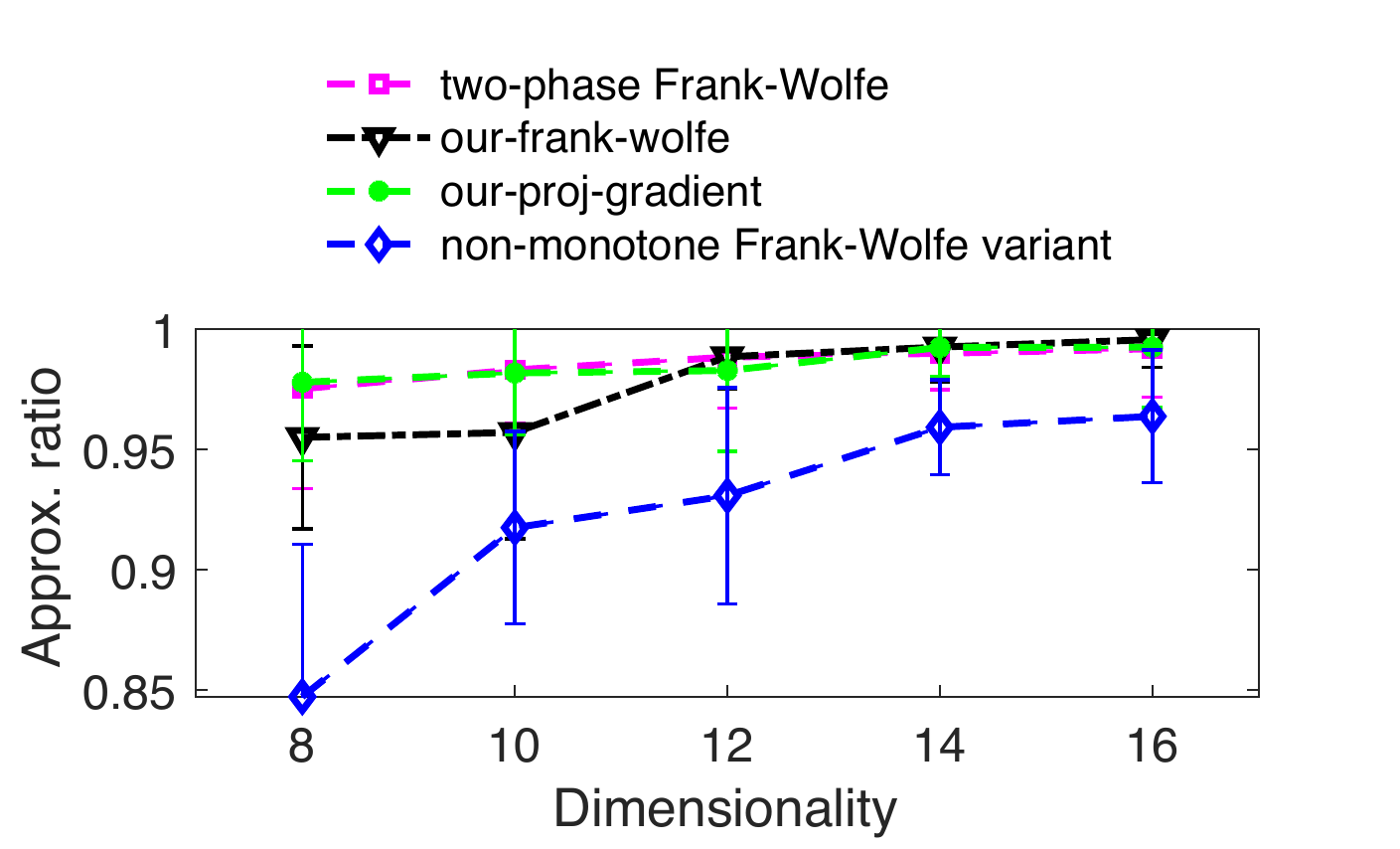}}
	\parbox{0.33\textwidth}{\includegraphics [width=0.33\textwidth] {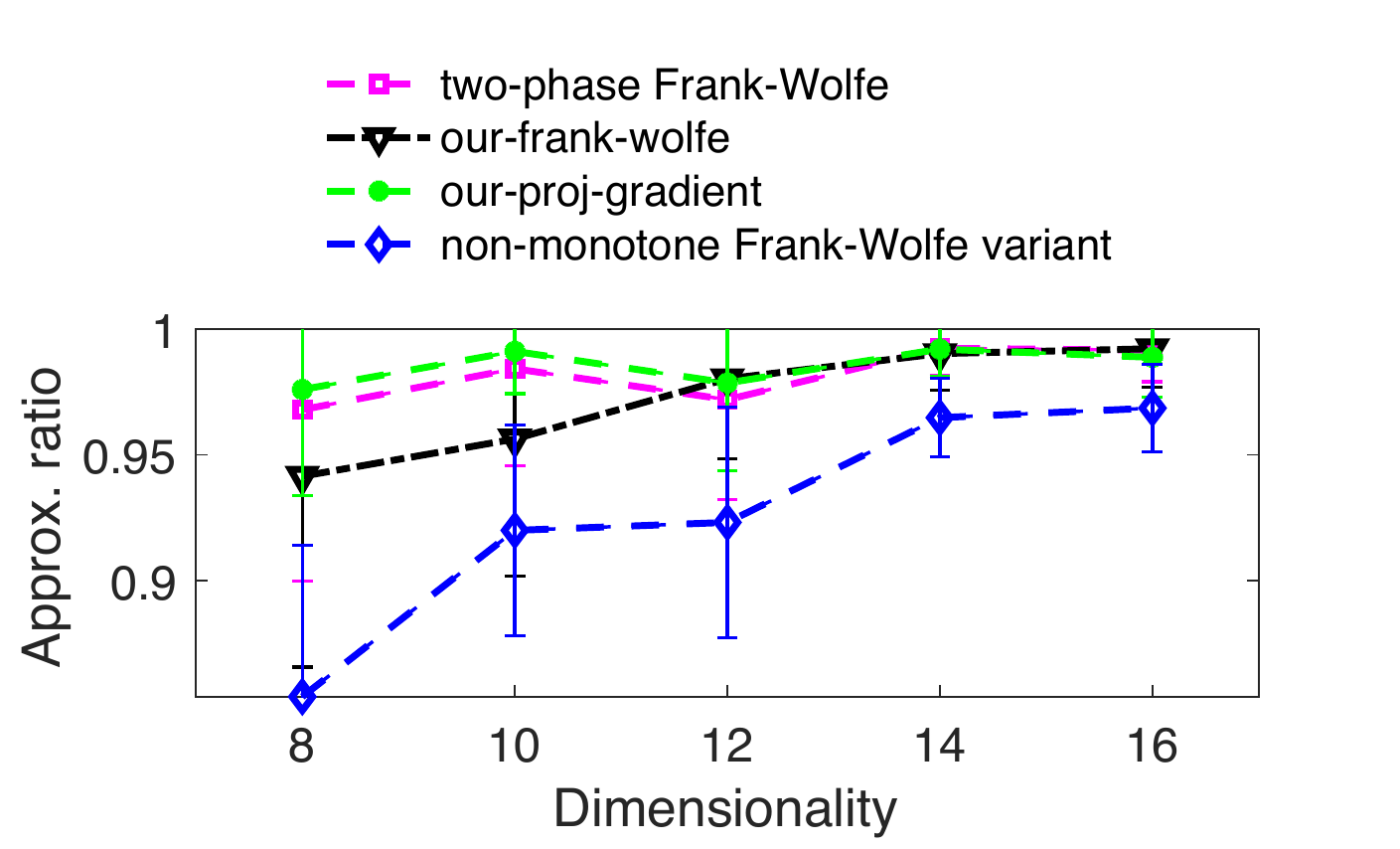}}
	}
\centerline
{
\parbox{0.33\textwidth}{\centering{(a)}}
\parbox{0.33\textwidth}{\centering{(b)}}
\parbox{0.33\textwidth}{\centering{(c)}}
}
\caption{Uniform distribution with down-closed polytope and (a) $m = \lfloor 0.5 n \rfloor$ (b)  $m = n$ (c) $m = \lfloor 1.5 n \rfloor$.}
\label{Down-Uni}
\end{figure}
%

%
\medskip
\noindent \textbf{Exponential Distribution}: Here, the entries of $-\vect{H}$ and $\vect{A}$ are sampled from exponential distributions $Exp(\lambda)$ where given a random variable $y \geq 0$, the probability density function of $y$ is defined by $\lambda e^{-\lambda y}$, and for $y < 0$, its density is fixed to be $0$. Specifically, each entry of H is sampled from $Exp(1)$, then the matrix $-\vect{H}$ is made to be symmetric. Each entry of $\vect{A}$ is sampled from $Exp(0.25) + \mu$, where $\mu = 0.01$ is a small positive constant.
%

\medskip
We set $\vect{b} = \vect{1}^m$, and $\vect{u}$ to be the tightest upper bound of $\mathcal{K}$ by $u_j = \min_{i \in [m]} \frac{b_i}{A_{ij}}, \forall j \in [n]$.
In order to make $f$ non-monotone, we set $\vect{h} = -0.2 * \vect{H}^{\top} \vect{u}$. To make sure that $f$ is non-negative, we first of all solve the problem $\min_{\vect{x} \in \mathcal{P}} \frac{1}{2} \vect{x}^{\top} \vect{H} \vect{x} + \vect{h}^{\top} \vect{x}$ using QUADPROGIP. Let the solution to be $\hat{x}$, then we set $c = - f(\hat{x}) + 0.1 * |f(\hat{x})|$.

The approximation ratios w.r.t.\ dimensionalities $(n)$ are plotted in Figures~\ref{Down-Uni} and \ref{Down-Exp} for the two distributions.
In each figure, we set the number of constraints to be $m = \lfloor 0.5n \rfloor$, $m = n$ and $m = \lfloor1.5n \rfloor$, respectively.

We can observe that our version of Frank-Wolfe (denoted our-frank-wolfe) and gradient ascent algorithm (denoted by proj-gradient) have comparable performance with the state-of-the-art algorithms when optimizing submodular functions over down-closed convex sets. Note that the performance is clearly consistent with the proven approximation guarantee of $1/e$ shown in \cite{BianLevy17:Non-monotone-Continuous}.
We also show that the performance of our algorithms are consistent with the proven approximation guarantee of $1/3\sqrt3$ for down-closed convex sets.

\begin{figure}[th]
\centerline {
	\parbox{0.33\textwidth}{\includegraphics [width=0.33\textwidth] {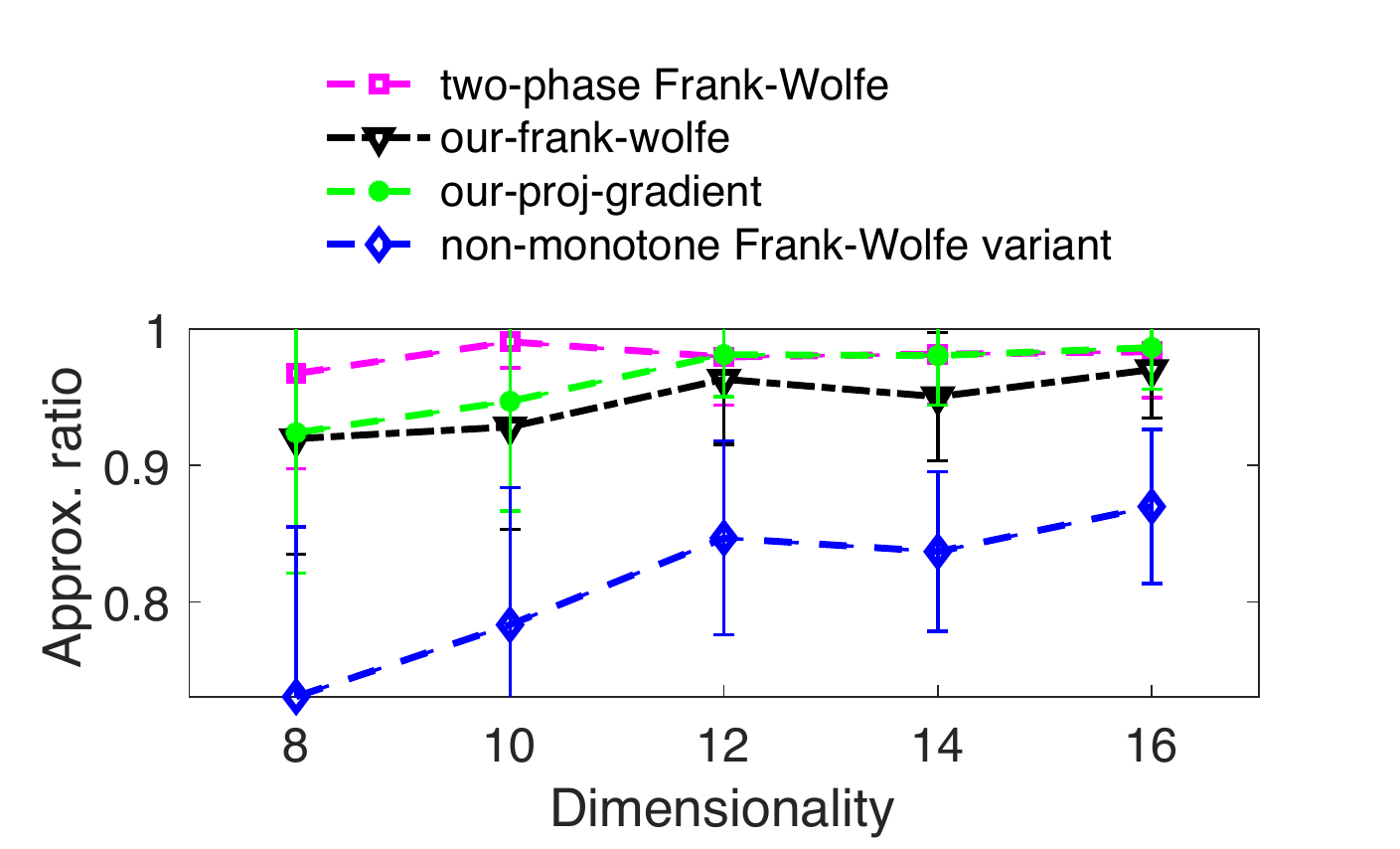}}
	\parbox{0.33\textwidth}{\includegraphics [width=0.33\textwidth] {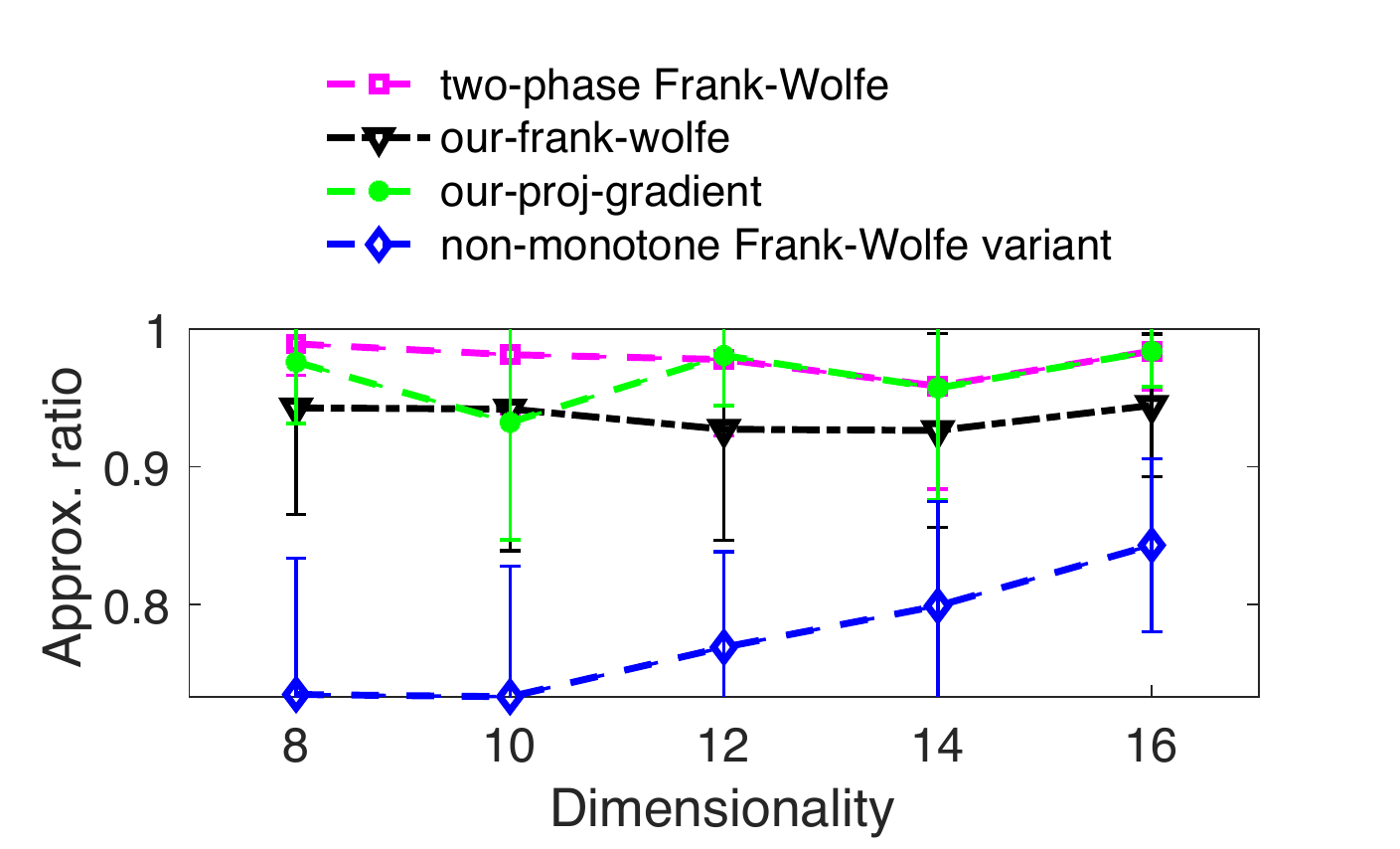}}
	\parbox{0.33\textwidth}{\includegraphics [width=0.33\textwidth] {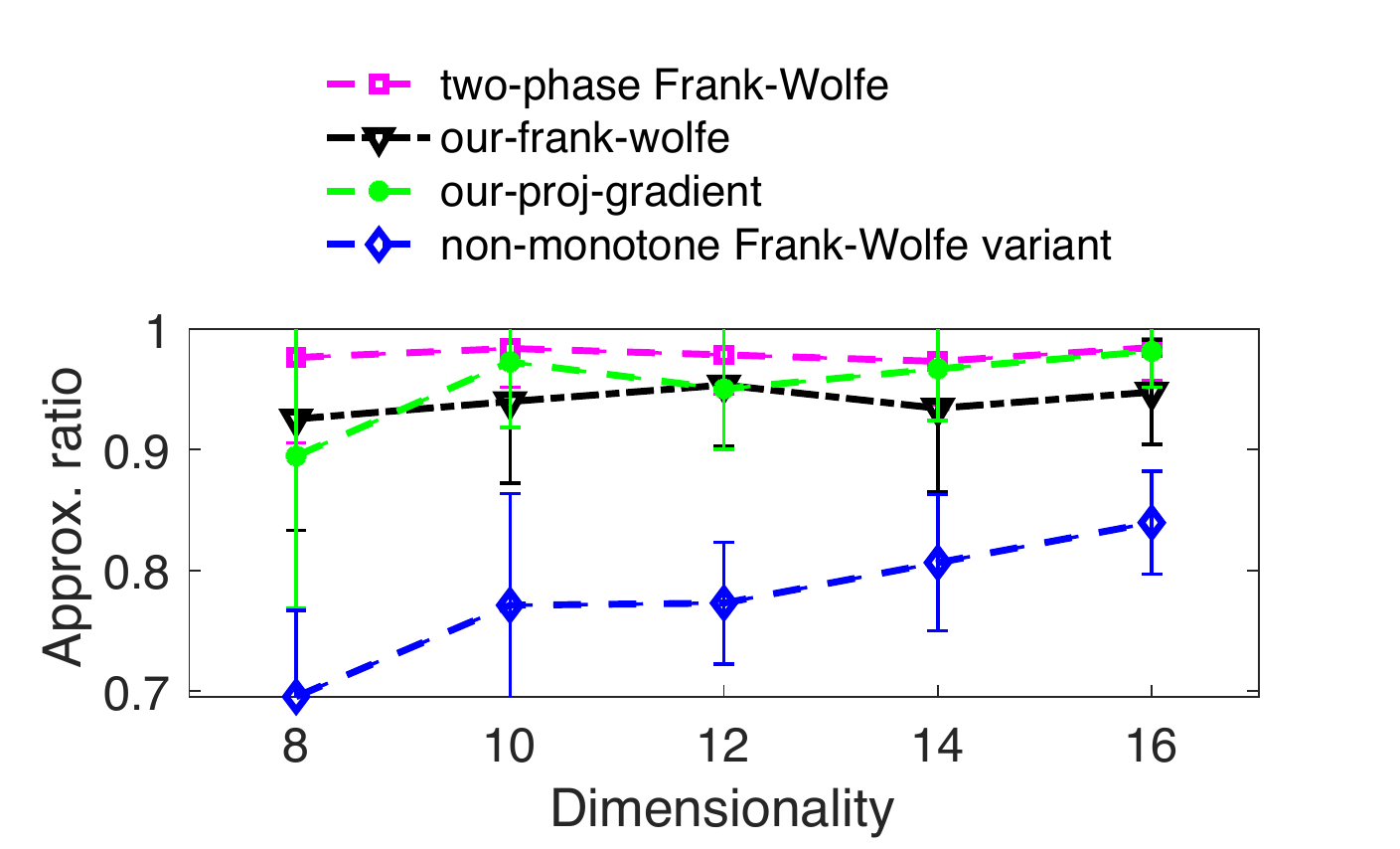}}
	}
\centerline
{
\parbox{0.33\textwidth}{\centering{(a)}}
\parbox{0.33\textwidth}{\centering{(b)}}
\parbox{0.33\textwidth}{\centering{(c)}}
}
\caption{Exponential distribution with down-closed polytope and (a) $m = \lfloor 0.5 n \rfloor$, (b)  $m = n$, (c) $m = \lfloor 1.5 n \rfloor$}
\label{Down-Exp}
\end{figure}
%
%
%
%

\subsubsection{Maximizing Softmax Extentions}

Determinantal point processes (DPPs) are probabilistic models of repulsion, that have been used to model diversity in machine learning~\cite{KuleszaTaskar12:Determinantal-point}. The constrained MAP (maximum a posteriori) inference problem of a DPP is an NP-hard combinatorial problem. One of the current methods with the best known approximation guarantee is based on the softmax extension~\cite{GillenwaterKulesza12:Near-optimal-map-inference}, which is a DR-submodular function. Let $\vect{L}$ be the positive semidefinite kernel matrix of a DPP, its softmax extension is:
$$f(\vect{x}) = \log det(diag(\vect{x})(\vect{L} - \vect{I}) + \vect{I}), \vect{x} \in [0, 1]^n, $$
where $\vect{I}$ is the identity matrix, $diag(\vect{x})$ is the diagonal matrix with diagonal elements set as $\vect{x}$. The problem of MAP inference in DPPs corresponds to the problem of maximizing $f$ over a convex polytope $\mathcal{K}$.

\begin{figure}[h]
\centerline {
	\parbox{0.33\textwidth}{\includegraphics [width=0.33\textwidth] {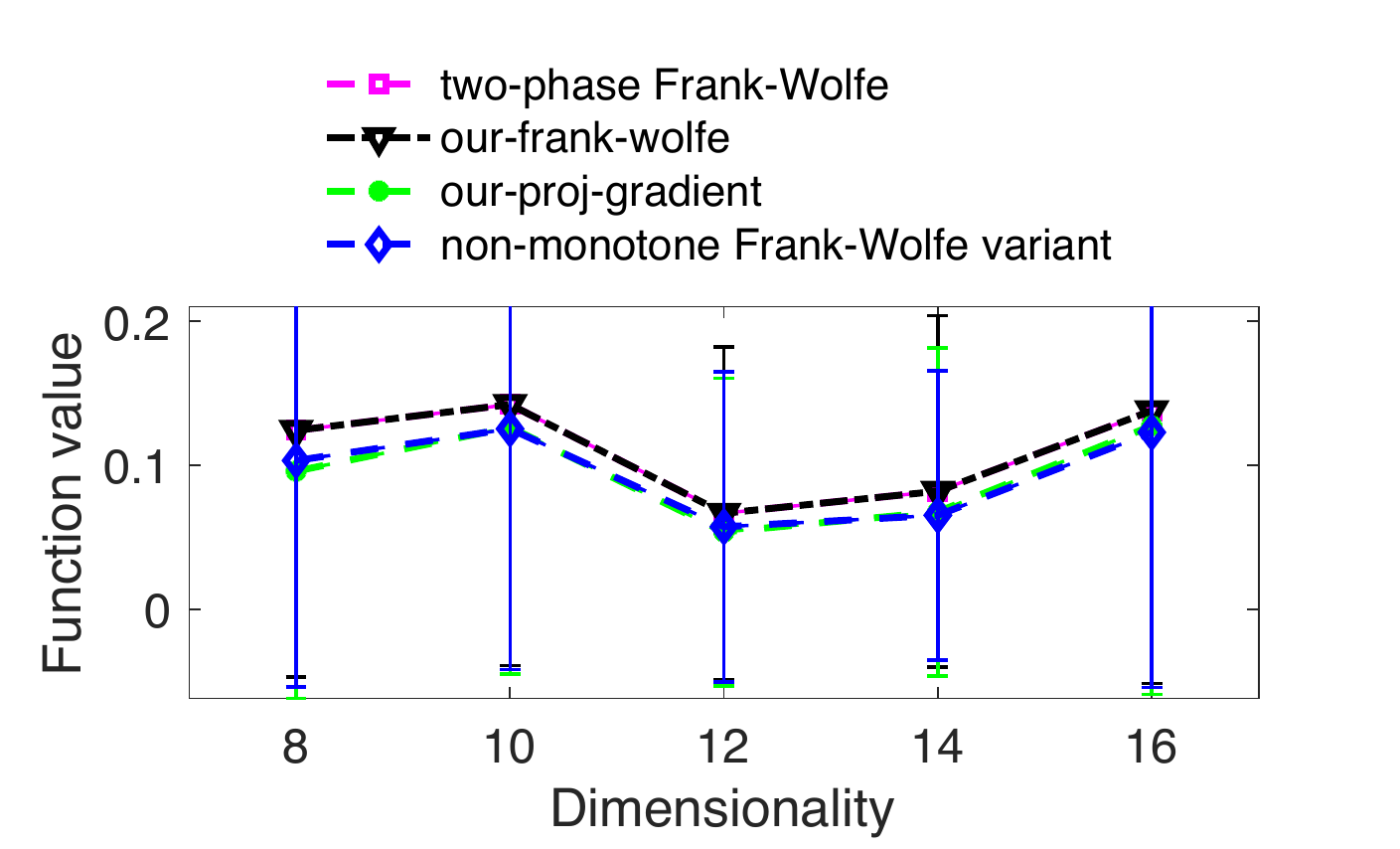}}
	\parbox{0.33\textwidth}{\includegraphics [width=0.33\textwidth] {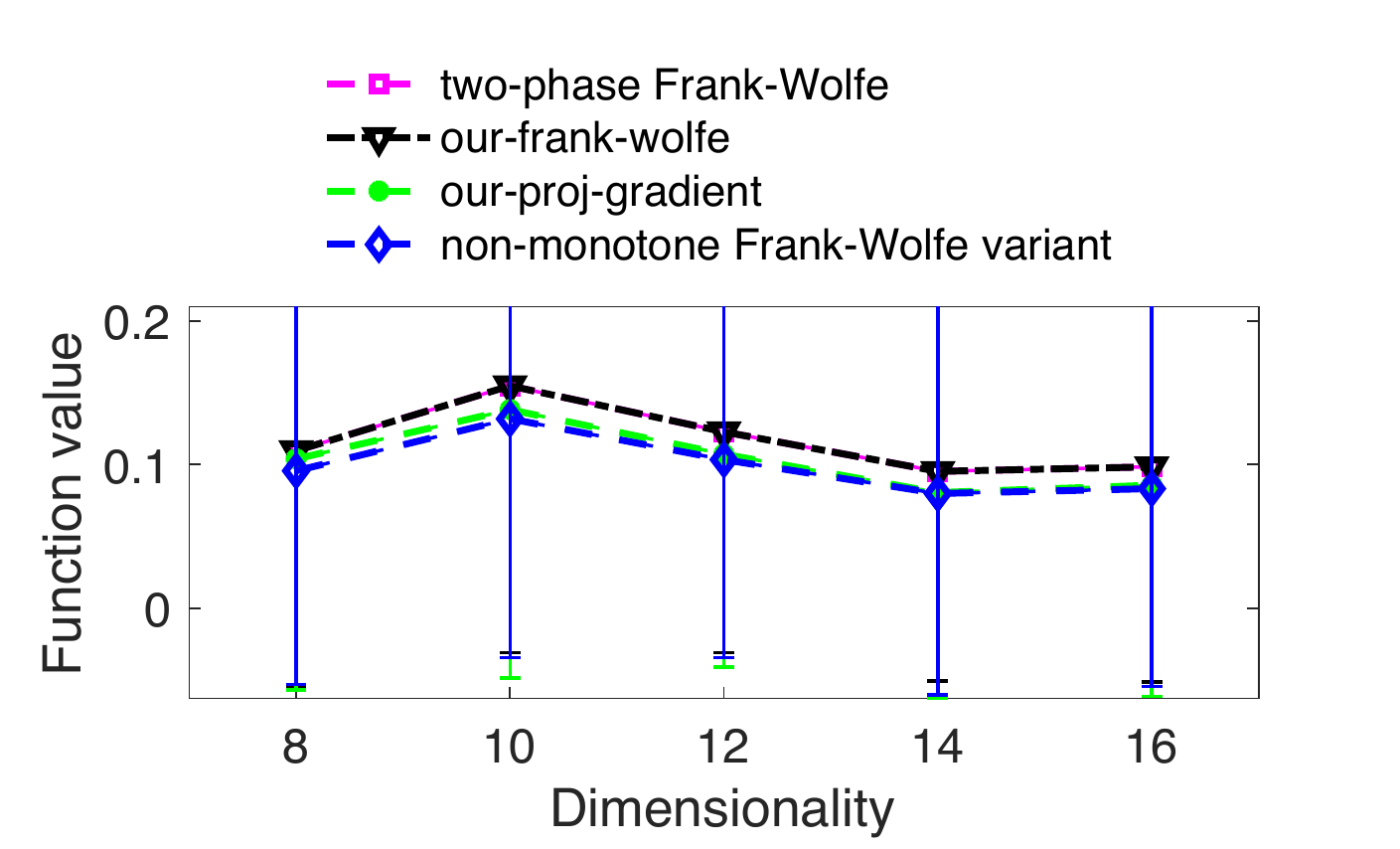}}
	\parbox{0.33\textwidth}{\includegraphics [width=0.33\textwidth] {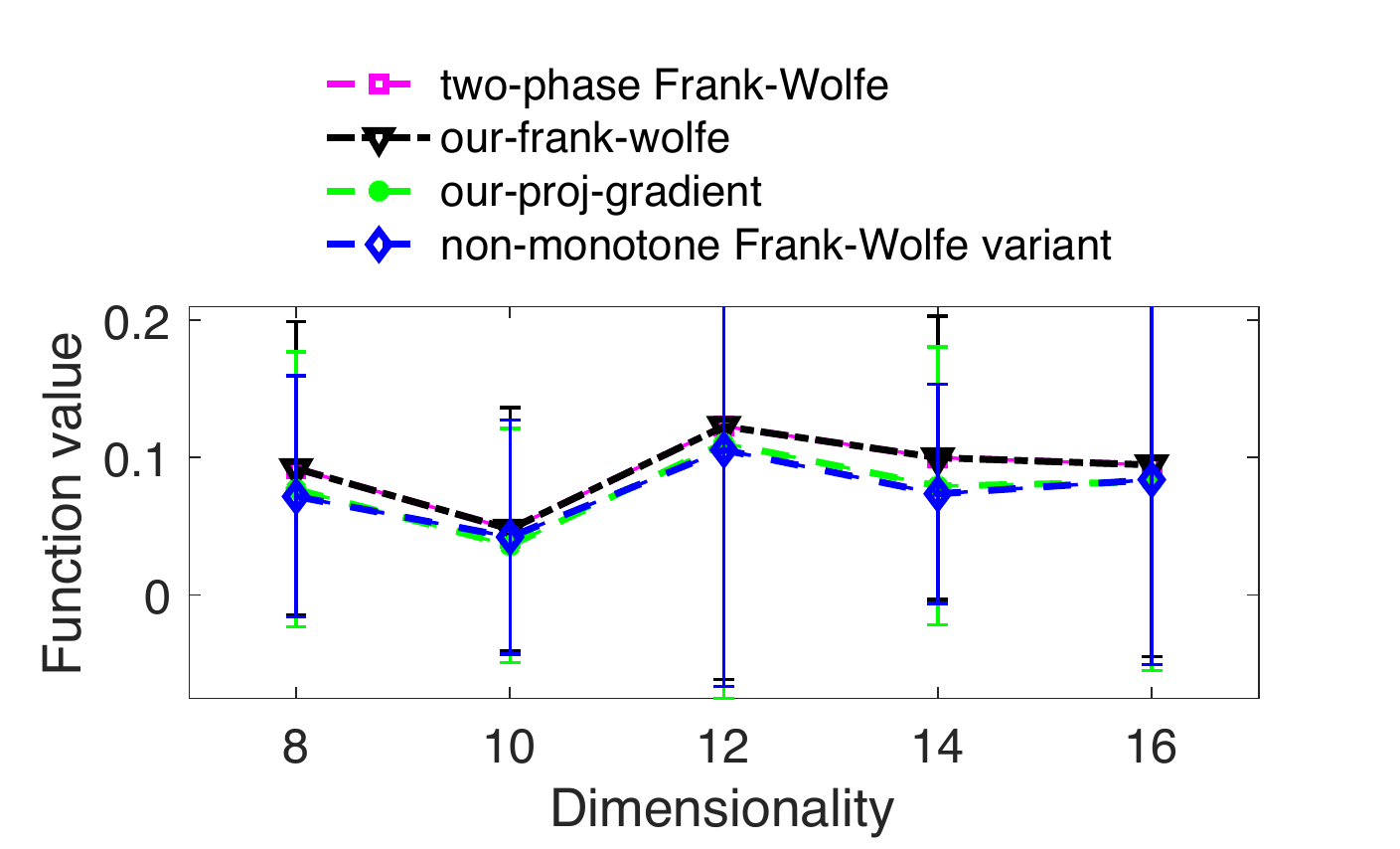}}
	}
\centerline
{
\parbox{0.33\textwidth}{\centering{(a)}}
\parbox{0.33\textwidth}{\centering{(b)}}
\parbox{0.33\textwidth}{\centering{(c)}}
}
\caption{Softmax extension in uniform distribution with down-closed polytope and (a) $m = \lfloor 0.5 n \rfloor$, (b)  $m = n$(c) $m = \lfloor 1.5 n \rfloor$}
\label{Soft-Uni}
\end{figure}
\begin{figure}[h]
\centerline {
	\parbox{0.33\textwidth}{\includegraphics [width=0.33\textwidth] {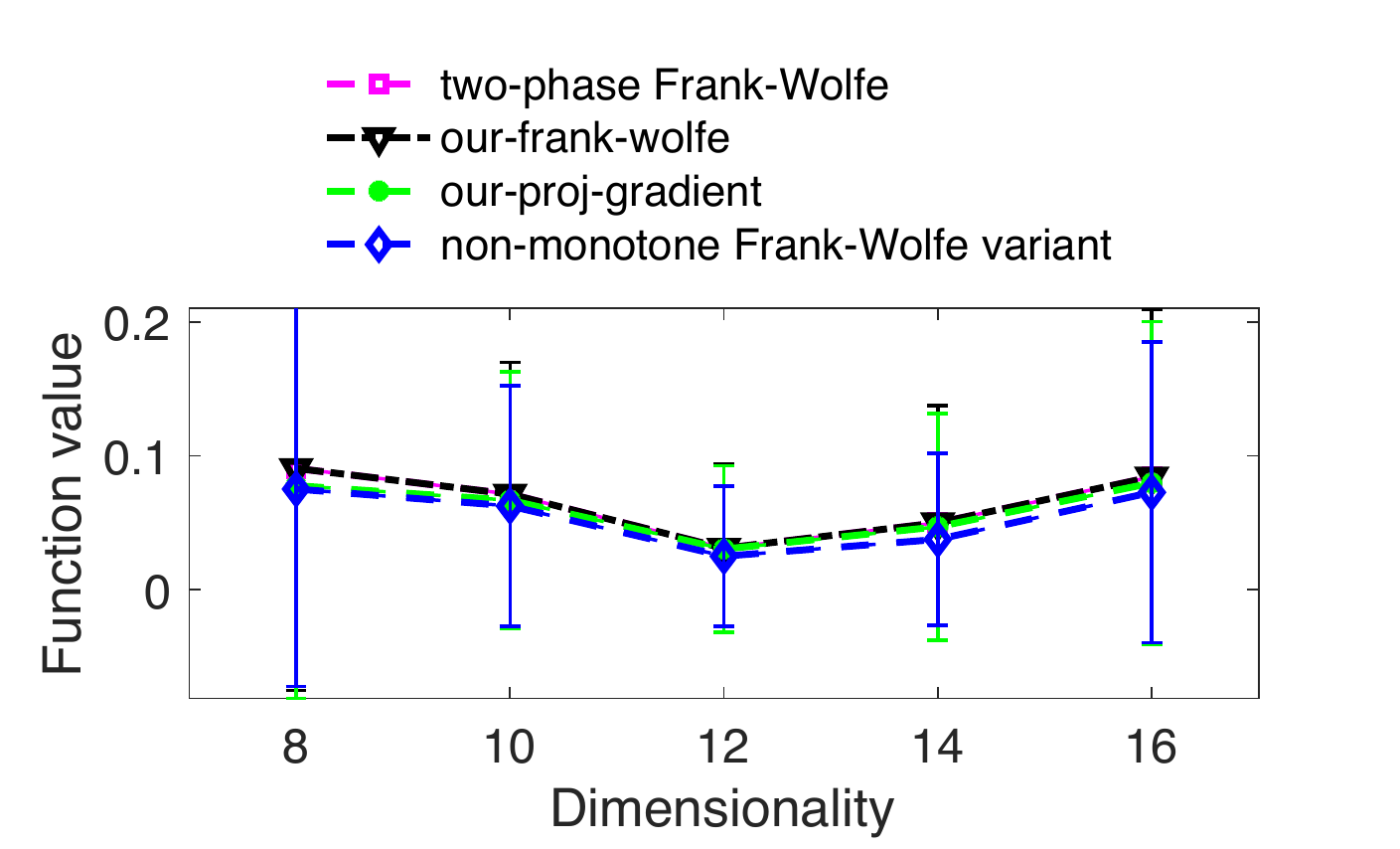}}
	\parbox{0.33\textwidth}{\includegraphics [width=0.33\textwidth] {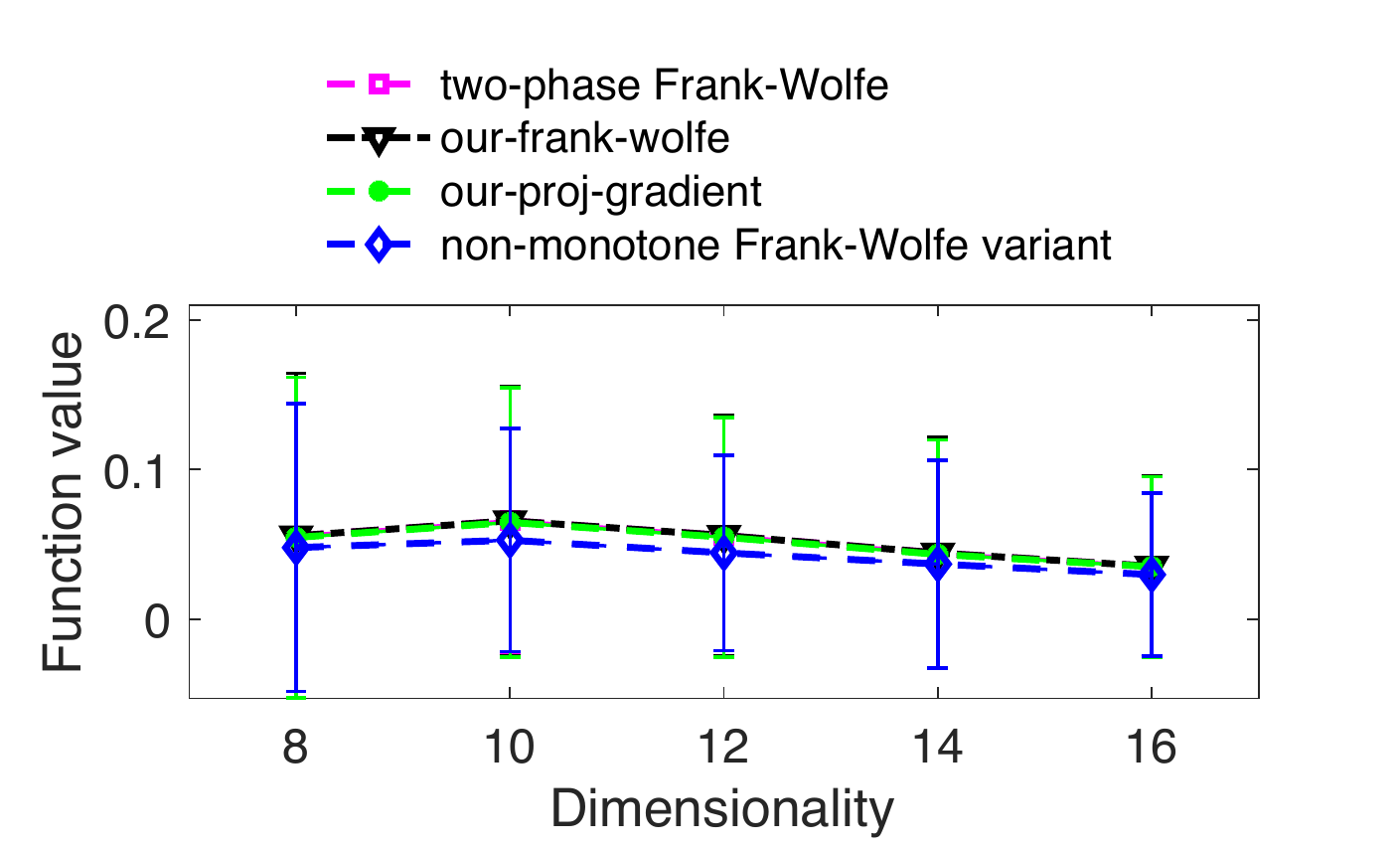}}
	\parbox{0.33\textwidth}{\includegraphics [width=0.33\textwidth] {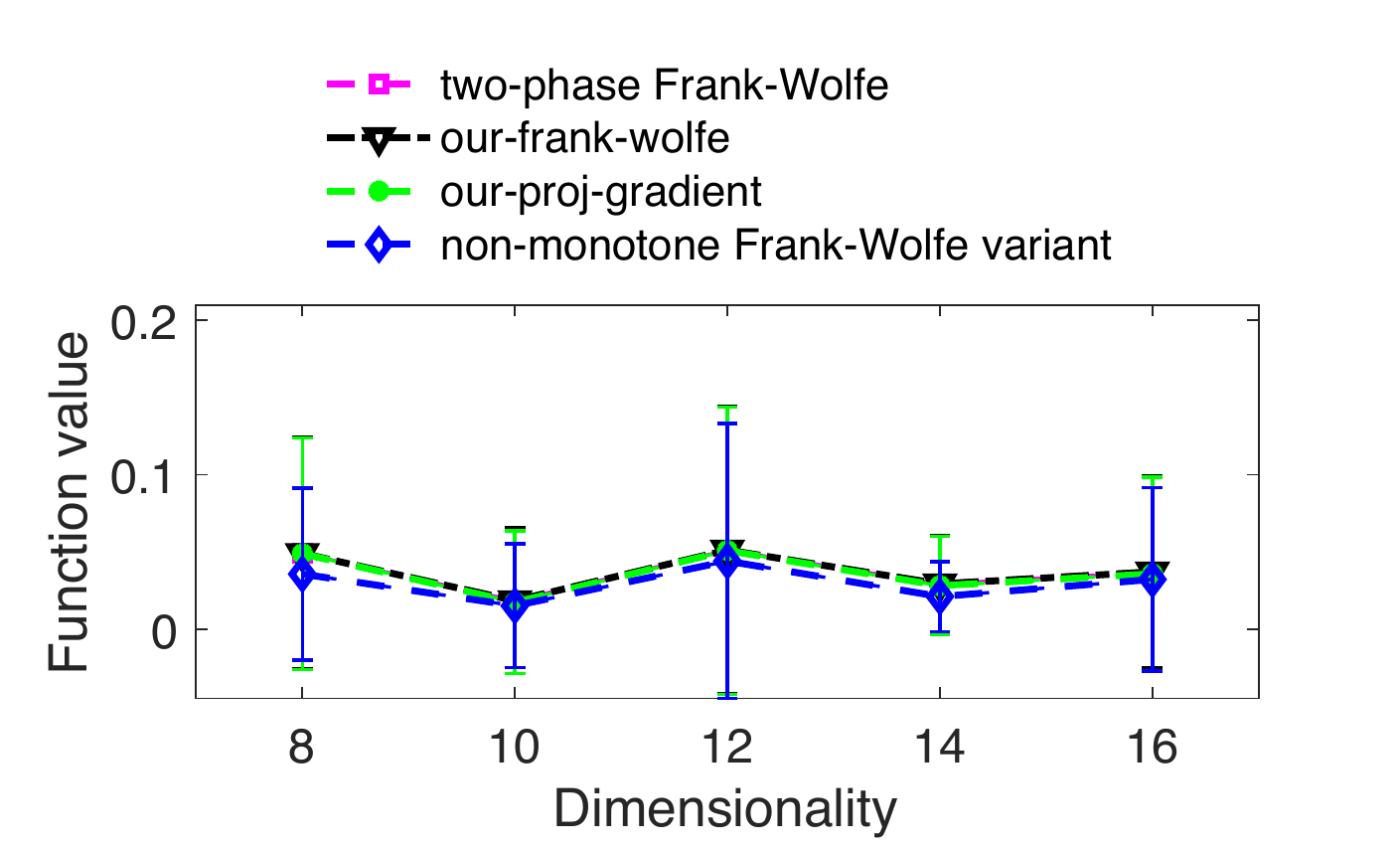}}
	}
\centerline
{
\parbox{0.33\textwidth}{\centering{(a)}}
\parbox{0.33\textwidth}{\centering{(b)}}
\parbox{0.33\textwidth}{\centering{(c)}}
}
\caption{Softmax extension in exponential distribution with down-closed polytope and (a) $m = \lfloor 0.5 n \rfloor$ (b)  $m = n$}
\label{Soft-Exp}
\end{figure}
%
%

We generate the softmax objectives in the following way: first generate the $n$ eigenvalues $\vect{d} \in \mathbb{R}^n_{+}$, each randomly distributed in $[0, 1.5]$, and set $\vect{D} = diag(\vect{d})$. After generating a random unitary matrix $\vect{U}$, we set $\vect{L} = \vect{UDU^\top}$. One can verify that $\vect{L}$ is positive semidefinite and has eigenvalues as the entries of $\vect{d}$. We generate polytope constraints in the same form and same way as that for DR submodular quadratic and exponential functions, except for setting $b = 2\cdot \vect{1}^m$.
Function values returned by different solvers w.r.t. $n$ are shown in Figures \ref{Soft-Uni} and \ref{Soft-Exp}.

We can observe that our version of Frank-Wolfe performs at least as good as the two-phase Frank-Wolfe algorithm mentioned in~\cite{BianLevy17:Non-monotone-Continuous} for the down-closed polytope are generated using uniform distribution. In case of exponential distributions, our algorithms have comparable performance with the state-of-the-art algorithms.

\subsubsection{Offline Algorithm over General Polytopes}

Here, we consider the problem of revenue maximization on a (undirected) social network graph $G = (V, W)$, where $w_{ij} \in W$ represents the weight of the edge between vertex $i$ and vertex $j$. The goal is to offer for free or advertise a product to users so that the revenue increases through their ``word-of-mouth" effect on others. If one invests $x$ unit of cost on a user $i \in V$, the user $i$ becomes an advocate of the product (independently from other users) with probability $1- (1-p)^x$ where $p \in (0,1)$ is a parameter. Intuitively, it signifies that for investing a unit cost to $i$, we have an extra chance that the user $i$ becomes an advocate with probability $p$~\cite{Soma:2017}.

Let $S \subset V$ be a set of users who advocate for the product. Note that $S$ is random set. Then the revenue with respect to $S$ is defined as $\sum \limits_{i \in S} \sum \limits_{j \in V \setminus S} w_{ij}$. Let $f : \mathbb{Z}^E_+ \rightarrow \mathbb{R}$ be the expected revenue obtained in this model, that is
\begin{align*}
f(\vect{x}) &= \mathbb{E}_S \big[\sum \limits_{i \in S} \sum \limits_{j \in V \setminus S} w_{ij} \big] = \sum \limits_{i} \sum \limits_{j: i \not = j} w_{ij} (1-(1-p)^{x_i}) (1-p)^{x_j}
\end{align*}
It has been shown that $f$ is a non-monotone DR-submodular function~\cite{Soma:2017}. In our experiments, we used the Advogato network with $6.5$K users (vertices) and $61$K weighted relationship (edges). We set $p = 0.0001$. We imposed a minimum and a maximum investment constraint on the problem such that $0.25 \leq \sum_i x_i \leq 1$. This, in addition with $x_i \geq 0$ constitutes a general feasible polytope.

In Figure~\ref{real-world-data}(a), we show the performance of the Frank-Wolfe algorithm (as mentioned in Section~\ref{offline-dr-submod}) and commonly used Gradient Ascent algorithm. It is imperative to note that no performance guarantee is known for the Gradient Ascent algorithm for maximizing a non-monotone DR-submodular function over a general constraint polytope. We can clearly observe that the Frank-Wolfe algorithm performs at least as good as the commonly used Gradient Ascent algorithm.

\begin{figure}[h]
\vspace{-1.5cm}
\centerline {
	\parbox{0.33\textwidth}{\includegraphics [width=0.33\textwidth] {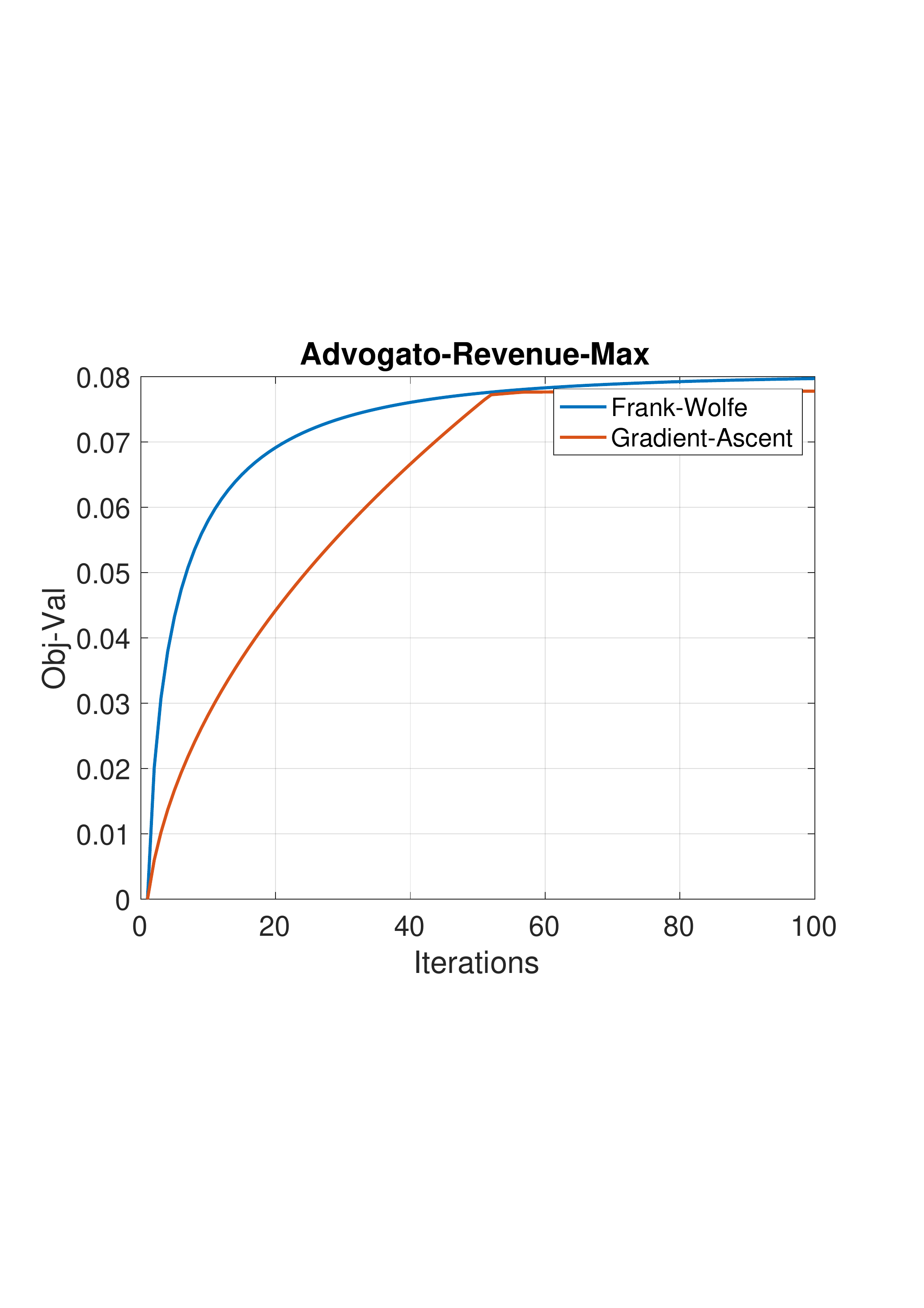}}
	\parbox{0.33\textwidth}{\includegraphics [width=0.33\textwidth] {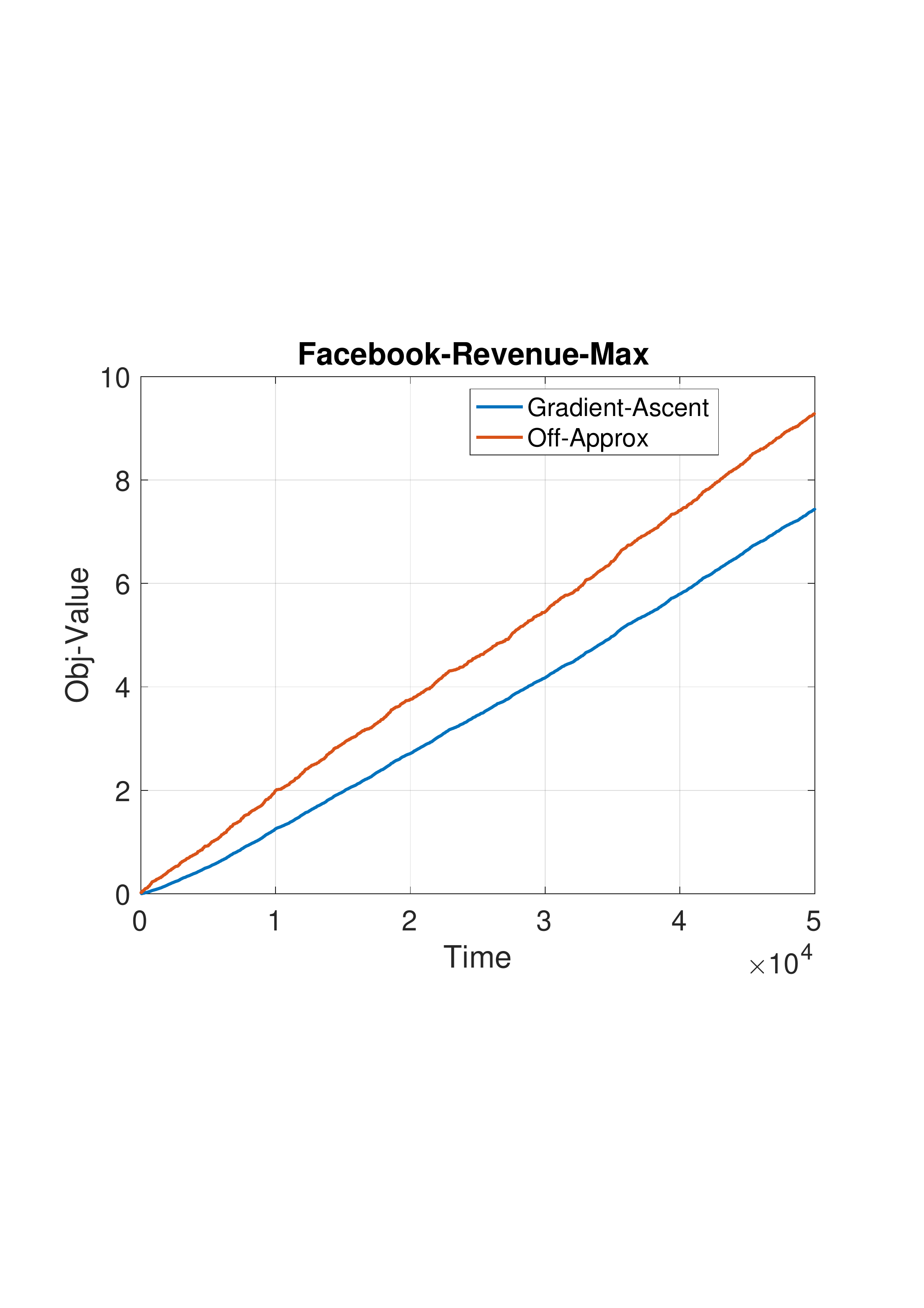}}
	\parbox{0.33\textwidth}{\includegraphics [width=0.33\textwidth] {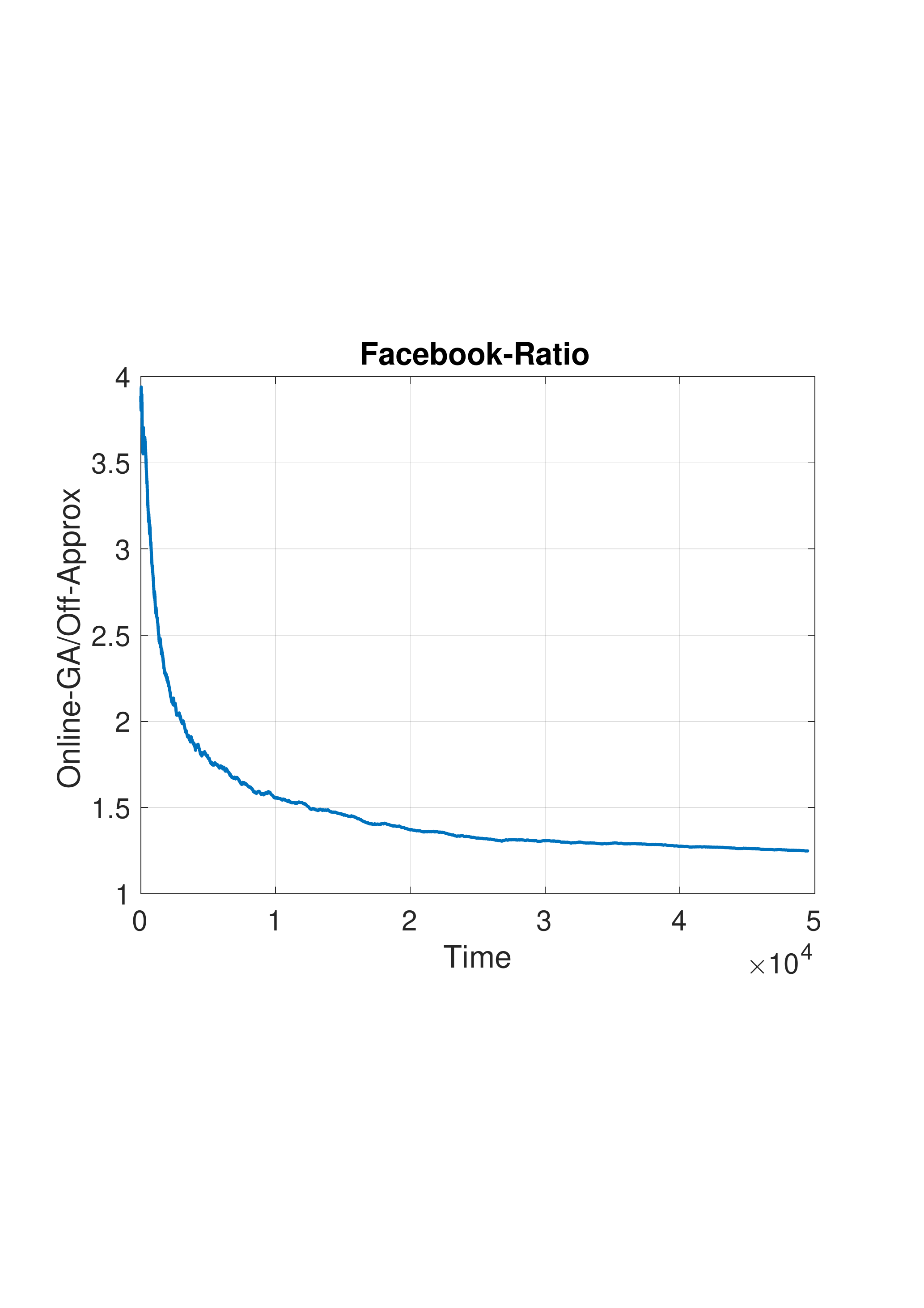}}
	}
\vspace{-1.0cm}
\centerline
{
\parbox{0.33\textwidth}{\centering{(a)}}
\parbox{0.33\textwidth}{\centering{(a)}}
\parbox{0.33\textwidth}{\centering{(b)}}
}
\caption{(a) Offline Revenue Maximization on Advogato dataset, (b) Online Revenue Maximization for Facebook dataset for a down-closed polytope and (c) Ratio of performance of Online Gradient Ascent over Offline approximation for Facebook dataset for a down-closed polytope}
\label{real-world-data}
\end{figure}

\subsubsection{Online Algorithm over Down-closed Polytopes}

In this subsection, we consider the online variant of the revenue maximization on a (undirected) social network where at time $t$ the weight of an edge is given $w^t_{ij} \in \{0,1\}$. The experiments are performed on the Facebook dataset that contains $64$K users (vertices) and $1$M relationships (edges). We choose the number of time steps to be $T=50,000$. At each time $t \in {1, \ldots, T}$, we randomly uniformly select $2000$ vertices $V^t \subset V$, independently of $V^1,\ldots,V^{t-1}$, and construct a batch $B_t$ with edge-weights $w^t_{ij} = 1$ if and only if $i,j \in V^t$ and edge $(i,j)$ exists in the Facebook dataset. In case if $i$ or $j$ do not belong to $V^t$, $w^t_{ij} = 0$.

We again set $p = 0.0001$ and impose a maximum investment constraint on the problem such that $\sum \limits_{i \in V} x_i \leq 1$. This, in addition to $x_i \geq 0, \forall i \in V$ constitutes a down-closed feasible polytope.

For comparison purposes, we chose the (offline) Frank-Wolfe algorithm that is shown to be $\frac{1}{e}$-approximation for maximizing non-monotone DR-submodular function over down-closed polytopes~\cite{BianLevy17:Non-monotone-Continuous}. Using this algorithm, we first computed  $x^*$ such that $x^*$ approximately maximizes $\sum_t F^t(x)$ and then computed $\sum_{t \leq t'} F^t(x^*)$ for every $t' \in {1,\ldots, T}$. In Figure~\ref{real-world-data}(b), we show how the function $\sum_{t \leq t'} F^t(x)$ evolves with time $t'$ for the Online Gradient Ascent algorithm (as mentioned in Section~\ref{online-dr-submod}) in comparison to $\sum_{t \leq t'} F^t(x^*)$. In Figure~\ref{real-world-data}(c), we show the ratio of between the objective value achieved by the Online Ascent algorithm and $\sum_{t \leq t'} F^t(x^*)$. The gradual reduction in this ratio (over time) conforms with the fact that the additive term in the theoretical guarantee reduces with time.

\paragraph{Projection on $\cal K$.}

The typical implementation of the projection operator $\textrm{Proj}_{\cal k}(\vect x)$  would consist of solving the quadratic program, where we want to minimize $\| \vect{x} - \vect{y} \|$ under the constraint $\vect{y} \in \cal K$.
However using the particular structure of our convex space $\cal K$ a more efficient projection operator is possible.  Recall that $\cal K$ is defined as the set of all points $\vect y$ with non negative coordinates and $\sum_{i=1}^n \vect{y}_i \leq 1$.  Without loss of generality the entries of $\vect{x}$ are sorted in non increasing order $\vect{x}_1 \geq \ldots \vect{x}_n$.  If $\vect{x}\not\in\cal K$, then its projection is a vector $\vect{y}$ of the form
\[
			\vect{y} = (\vect{x}_1 - \delta_j, \ldots , \vect{x}_j - \delta_j, 0, \ldots, 0),
\]
for some index $j$ and $\delta_j = (\vect{x}_1 + \ldots \vect{x}_j - 1) / j$.  In the degenerate case when all entries of $\vect x$ are non positive, the projection is the vector $\vect 0$.
The distance between $\vect{x}$ and its projection $\vect{y}$ is then
\[
			\|\vect{x} - \vect{y}\|^2 = \delta^2 \cdot j + \vect{x}_{j+1}^2 + \ldots + \vect{x}_{n}^2.
\]
By optimality of the projection, we have that $j$ is the maximal index satisfying $\delta_j \geq \vect{x}_j$.
Hence the projection can be computed in time $O(n \log n)$, by first sorting the entries of $\vect{x}$ in time $O(n\log n)$ and then by iterating over $j$, maintaining in constant time the sum $\vect{x}_1 + \ldots + \vect{x}_j$.  However we observed better performance of another projection algorithm with complexity $O(n^2)$ which exploits better the possibilities of MATLAB.  It consists of an iterative procedure. While $\vect{x} \not\in \cal K$ we project $\vect{x}$ to the positive sub-space (i.e.\ set all its negative entries to 0) and then remove $\delta$ from all non zero entries, where $\delta$ is the average of all non zero entries in $\vect{x}$.  While this procedure can iterate $n$ times in the worst case, in practice it often iterates only a constant number of times.

\section{Conclusion}

In this paper, we have provided performance guarantees for the problems of 
maximizing non-monotone submodular/DR-submodular functions over convex sets in 
offline and online environments. These results are completed by experiments in different contexts.
Moreover, the results give raise to the question of designing online algorithms for non-monotone 
DR-submodular maximization over a general convex set.  
Characterizing necessary and sufficient regularity conditions/structures that enable 
efficient algorithm with approximation and regret guarantees is an interesting direction to pursue. 


\newpage
\appendix
\section{Properties of DR-submodularity}
We provide the proofs of the properties of DR-submodular functions mentioned in Section~\ref{sec:pre}.

\setcounter{lemma}{0}

\begin{lemma}[\cite{HassaniSoltanolkotabi17:Gradient-methods}]
For every $\vect{x}, \vect{y} \in \mathcal{K}$ and any DR-submodular function $F: [0,1]^{n} \rightarrow \mathbb{R}^{+}$,
it holds that
$$ \langle \nabla F(\vect{x}), \vect{y} - \vect{x}\rangle \geq F(\vect{x} \vee \vect{y}) + F(\vect{x} \wedge \vect{y}) - 2F(\vect{x}). $$
\end{lemma}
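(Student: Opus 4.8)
The plan is to decompose $\vect{y} - \vect{x}$ into its coordinatewise nonnegative and nonpositive parts and to bound each part separately using the differential form of DR-submodularity, namely $\nabla F(\vect{a}) \le \nabla F(\vect{b})$ whenever $\vect{a} \ge \vect{b}$ (Inequality (\ref{def:DR-sub-diff})). Concretely, I would set $\vect{p} := (\vect{x} \vee \vect{y}) - \vect{x} = (\vect{y}-\vect{x}) \vee \vect{0} \ge \vect{0}$ and $\vect{q} := (\vect{x} \wedge \vect{y}) - \vect{x} = (\vect{y}-\vect{x}) \wedge \vect{0} \le \vect{0}$, so that $\vect{y} - \vect{x} = \vect{p} + \vect{q}$ and hence $\langle \nabla F(\vect{x}), \vect{y} - \vect{x}\rangle = \langle \nabla F(\vect{x}), \vect{p}\rangle + \langle \nabla F(\vect{x}), \vect{q}\rangle$. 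It then suffices to show $\langle \nabla F(\vect{x}), \vect{p}\rangle \ge F(\vect{x} \vee \vect{y}) - F(\vect{x})$ and $\langle \nabla F(\vect{x}), \vect{q}\rangle \ge F(\vect{x} \wedge \vect{y}) - F(\vect{x})$.

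For the first bound I would consider the univariate function $\xi \mapsto F(\vect{x} + \xi \vect{p})$ on $[0,1]$, whose endpoint values are $F(\vect{x})$ and $F(\vect{x} \vee \vect{y})$, and write the difference via the fundamental theorem of calculus as $F(\vect{x}\vee\vect{y}) - F(\vect{x}) = \int_0^1 \langle \nabla F(\vect{x} + \xi \vect{p}), \vect{p}\rangle \, d\xi$ (valid since $F$ is differentiable). Because $\vect{p} \ge \vect{0}$ we have $\vect{x} + \xi \vect{p} \ge \vect{x}$ for every $\xi \in [0,1]$, so by the differential DR property $\nabla F(\vect{x} + \xi \vect{p}) \le \nabla F(\vect{x})$; pairing this with the nonnegative vector $\vect{p}$ preserves the inequality, so the integrand is at most $\langle \nabla F(\vect{x}), \vect{p}\rangle$, and integrating gives $F(\vect{x}\vee\vect{y}) - F(\vect{x}) \le \langle \nabla F(\vect{x}), \vect{p}\rangle$.

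For the second bound I would run the symmetric argument with $\xi \mapsto F(\vect{x} + \xi \vect{q})$, whose endpoints are $F(\vect{x})$ and $F(\vect{x} \wedge \vect{y})$. Here $\vect{q} \le \vect{0}$ gives $\vect{x} + \xi \vect{q} \le \vect{x}$, so the differential DR property yields $\nabla F(\vect{x} + \xi \vect{q}) \ge \nabla F(\vect{x})$; now pairing with the \emph{nonpositive} vector $\vect{q}$ reverses the inequality, so the integrand is again at most $\langle \nabla F(\vect{x}), \vect{q}\rangle$, and integrating gives $F(\vect{x}\wedge\vect{y}) - F(\vect{x}) \le \langle \nabla F(\vect{x}), \vect{q}\rangle$. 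Adding the two bounds and using $\langle \nabla F(\vect{x}), \vect{y}-\vect{x}\rangle = \langle \nabla F(\vect{x}), \vect{p}\rangle + \langle \nabla F(\vect{x}), \vect{q}\rangle$ yields the claim.

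The endpoint identifications and the calculus step are routine; the only place that needs care — and the main subtlety of the argument — is tracking the direction of the inequality when testing a pointwise gradient comparison against a vector: it is preserved against a nonnegative vector but reversed against a nonpositive one. This is precisely why splitting $\vect{y} - \vect{x}$ at its sign (rather than working with $\vect{y} - \vect{x}$ directly) is the right move, since on $\vect{p}$ the iterate moves up from $\vect{x}$ while on $\vect{q}$ it moves down, and the DR monotonicity of the gradient must be applied consistently with each.
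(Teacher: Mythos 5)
Your proof is correct and is essentially the paper's own argument: both bound $F(\vect{x}\vee\vect{y})-F(\vect{x})$ and $F(\vect{x}\wedge\vect{y})-F(\vect{x})$ by $\langle\nabla F(\vect{x}),\cdot\rangle$ of the corresponding displacement, using the fundamental theorem of calculus along a segment together with the gradient antitonicity (\ref{def:DR-sub-diff}), and then sum via $(\vect{x}\vee\vect{y})+(\vect{x}\wedge\vect{y})=\vect{x}+\vect{y}$. The only cosmetic difference is that you parametrize the second segment downward from $\vect{x}$ to $\vect{x}\wedge\vect{y}$ while the paper parametrizes upward and evaluates the gradient at the upper endpoint; the sign bookkeeping you describe is handled correctly.
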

\begin{proof}
For any vectors $\vect{x} \leq \vect{z}$, using Inequality~(\ref{def:DR-sub-diff}), we have
\begin{align*}
F(\vect{z}) - F(\vect{x})
	&= \int_{0}^{1} \bigl \langle \vect{z} - \vect{x}, \nabla F\bigl(\vect{x} + t(\vect{z} - \vect{x}) \bigr) \bigr\rangle dt \\
&\leq \int_{0}^{1} \bigl \langle \vect{z} - \vect{x}, \nabla F(\vect{x}) \bigr\rangle dt
= \bigl \langle \vect{z} - \vect{x}, \nabla F(\vect{x}) \bigr\rangle.
\end{align*}
Therefore,
\begin{align}	\label{ineq:concave-dr1}
F(\vect{x} \vee \vect{y}) - F(\vect{x}) \leq \bigl \langle \vect{x} \vee \vect{y} - \vect{x}, \nabla F(\vect{x}) \bigr\rangle.
\end{align}

Similarly for vectors $\vect{x} \leq \vect{z}$, we have
\begin{align*}
F(\vect{z}) - F(\vect{x}) &= \int_{0}^{1} \bigl \langle \vect{z} - \vect{x}, \nabla F\bigl(\vect{x} + t(\vect{z} - \vect{x}) \bigr) \bigr\rangle dt \\
	&\geq \int_0^1 \langle \vect{z} - \vect{x}, \nabla F(\vect{z}) \rangle dt
	= \langle \vect{z} - \vect{x}, \nabla F(\vect{z}) \rangle.
\end{align*}
Therefore,
\begin{align}	\label{ineq:concave-dr2}
F(\vect{x} \wedge \vect{y}) - F(\vect{x}) \leq \bigl \langle \vect{x} \wedge \vect{y} - \vect{x}, \nabla F(\vect{x}) \bigr\rangle
\end{align}
Summing (\ref{ineq:concave-dr1}) and (\ref{ineq:concave-dr2}) and using the fact
$(\vect{x} \vee \vect{y}) +  (\vect{x} \wedge \vect{y}) = \vect{x} + \vect{y}$, we obtain
$$
F(\vect{x} \vee \vect{y}) + F(\vect{x} \wedge \vect{y}) - 2F(\vect{x})
	\leq \bigl \langle \vect{y} - \vect{x}, \nabla F(\vect{x}) \bigr\rangle.
$$
\end{proof}

\begin{lemma}[\cite{BianLevy17:Non-monotone-Continuous}]
For any DR-submodular function $F$ and for all $\vect{x}, \vect{y}, \vect{z} \in \mathcal{K}$ it holds that
$$F(\vect{x} \vee \vect{y}) + F(\vect{x} \wedge \vect{y}) + F(\vect{z}^* \vee \vect{z}) + F(\vect{z}^* \wedge \vect{z}) \geq F(\vect{y}),$$ 
where $\vect{z}^* = (\vect{x} \vee \vect{y}) - \vect{x}$.
\end{lemma}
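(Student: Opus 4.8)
The plan is to reduce the statement to the first--order inequalities for DR--submodular functions that already appear in the proof of Lemma~\ref{lem:prop2}: namely, along any non--negative direction $F$ is ``concave'', i.e.\ for $\vect{a} \le \vect{b}$ we have $F(\vect{b}) - F(\vect{a}) \le \langle \nabla F(\vect{a}), \vect{b} - \vect{a} \rangle$ and $F(\vect{b}) - F(\vect{a}) \ge \langle \nabla F(\vect{b}), \vect{b} - \vect{a} \rangle$, together with gradient monotonicity~(\ref{def:DR-sub-diff}), Lemma~\ref{lem:prop2}, and non--negativity of $F$. The starting point is a set of elementary identities: $\vect{z}^{*} = (\vect{x} \vee \vect{y}) - \vect{x} \ge \vect{0}$, and $\vect{x} \wedge \vect{y} = \vect{y} - \vect{z}^{*}$, so that $\vect{y}$ is obtained from $\vect{x} \wedge \vect{y}$ by moving along the non--negative direction $\vect{z}^{*}$; also $\vect{x} \vee \vect{y} = \vect{x} + \vect{z}^{*}$, and coordinate--wise $(\vect{z}^{*} \vee \vect{z}) + (\vect{z}^{*} \wedge \vect{z}) = \vect{z}^{*} + \vect{z}$, hence $(\vect{z}^{*} \vee \vect{z}) - \vect{z} \le \vect{z}^{*}$ and this vector is non--negative.

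First I would apply the concavity inequality on the segment $[\vect{x} \wedge \vect{y},\, \vect{y}]$, obtaining $F(\vect{y}) \le F(\vect{x} \wedge \vect{y}) + \langle \nabla F(\vect{x} \wedge \vect{y}),\, \vect{z}^{*} \rangle$; so it suffices to show $\langle \nabla F(\vect{x} \wedge \vect{y}),\, \vect{z}^{*} \rangle \le F(\vect{x} \vee \vect{y}) + F(\vect{z}^{*} \vee \vect{z}) + F(\vect{z}^{*} \wedge \vect{z})$. To do this I would split the direction as $\vect{z}^{*} = (\vect{z}^{*} \wedge \vect{z}) + \big((\vect{z}^{*} \vee \vect{z}) - \vect{z}\big)$, both summands non--negative, and bound the two inner products separately. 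For the part along $\vect{z}^{*} \wedge \vect{z}$: move the evaluation point of $\nabla F$ downward from $\vect{x} \wedge \vect{y}$ to $\vect{0}$ by gradient monotonicity~(\ref{def:DR-sub-diff}) (using $\vect{0} \le \vect{z}^{*} \wedge \vect{z}$ and, in the regime relevant to the application, $\vect{z}^{*} \wedge \vect{z} \le \vect{x} \wedge \vect{y}$), and then use the concavity inequality on $[\vect{0},\, \vect{z}^{*} \wedge \vect{z}]$ to get a bound by $F(\vect{z}^{*} \wedge \vect{z}) - F(\vect{0}) \le F(\vect{z}^{*} \wedge \vect{z})$. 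For the part along $(\vect{z}^{*} \vee \vect{z}) - \vect{z}$: again push the base of $\nabla F$ down to a point below both $\vect{x} \wedge \vect{y}$ and $\vect{z}$, then apply the concavity inequality along $[\vect{z},\, \vect{z}^{*} \vee \vect{z}]$ to produce the term $F(\vect{z}^{*} \vee \vect{z})$, using Lemma~\ref{lem:prop2} around $\vect{z}^{*} \wedge \vect{z}$ to turn the residual inner product into $F$--values and absorbing any remaining non--negative slack into $F(\vect{x} \vee \vect{y}) \ge 0$. Adding the two bounds gives the required inequality.

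The step I expect to be the main obstacle is this last combination. One must choose the intermediate base points in the splitting so that \emph{every} gradient term that appears is dominated by one of the three non--negative quantities $F(\vect{x} \vee \vect{y})$, $F(\vect{z}^{*} \vee \vect{z})$, $F(\vect{z}^{*} \wedge \vect{z})$; this forces a handful of coordinate--wise domination checks — most critically $\vect{z}^{*} \wedge \vect{z} \le \vect{x} \wedge \vect{y}$ — whose validity hinges on the precise form $\vect{z}^{*} = (\vect{x} \vee \vect{y}) - \vect{x}$ and on how $\vect{z}$ compares with $\vect{x}$. Equally delicate is keeping every application of gradient monotonicity~(\ref{def:DR-sub-diff}) in the legal direction, i.e.\ only ever moving the point at which $\nabla F$ is evaluated \emph{downward}; a single reversed comparison would break the chain, so that is where I would be most careful.
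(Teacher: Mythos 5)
Your overall strategy --- linearize at $\vect{x}\wedge\vect{y}$ and then convert the inner product $\langle\nabla F(\vect{x}\wedge\vect{y}),\vect{z}^*\rangle$ back into function values --- does not close, for two concrete reasons. First, the coordinate-wise domination $\vect{z}^*\wedge\vect{z}\leq\vect{x}\wedge\vect{y}$ that you yourself flag as the critical check is false in general: take a coordinate with $x_i=0$, $y_i=1$, $z_i=1$, so that $z^*_i=1$ and $(\vect{z}^*\wedge\vect{z})_i=1>0=(\vect{x}\wedge\vect{y})_i$. The lemma is stated for arbitrary $\vect{x},\vect{y},\vect{z}\in\mathcal{K}$, so retreating to ``the regime relevant to the application'' is not available. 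Second, even where the dominations hold, the inequality directions in your first split do not chain: moving the evaluation point of the gradient \emph{down} to $\vect{0}$ does preserve the upper bound (since $\nabla F$ is order-reversing and the direction is non-negative), but the concavity inequality anchored at the \emph{lower} endpoint gives $F(\vect{z}^*\wedge\vect{z})-F(\vect{0})\leq\langle\nabla F(\vect{0}),\vect{z}^*\wedge\vect{z}\rangle$, a \emph{lower} bound on the inner product, whereas you need an upper bound. The usable upper bound $\langle\nabla F(\vect{b}),\vect{b}-\vect{a}\rangle\leq F(\vect{b})-F(\vect{a})$ requires the gradient at the \emph{upper} endpoint of the segment, which would force you to move the evaluation point in the illegal direction. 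The second split suffers from the same problem, and ``absorbing the remaining slack into $F(\vect{x}\vee\vect{y})\geq 0$'' is not an argument: that term sits on the side of the inequality you may enlarge for free anyway, so it cannot rescue an inner product you have failed to dominate.

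For comparison, the paper's proof avoids gradients entirely. It uses the zeroth-order form of diminishing returns, $F(\vect{a}+\vect{d})-F(\vect{a})\geq F(\vect{b}+\vect{d})-F(\vect{b})$ for $\vect{a}\leq\vect{b}$ and $\vect{d}\geq\vect{0}$, applied twice: once with base points $\vect{z}^*\leq\vect{z}\vee\vect{z}^*$ and displacement $\vect{x}$, justified by the identity $\vect{x}\vee\vect{y}-\vect{z}^*=(\vect{x}+\vect{z})\vee\vect{y}-\vect{z}\vee\vect{z}^*=\vect{x}$, which gives $F(\vect{x}\vee\vect{y})+F(\vect{z}\vee\vect{z}^*)\geq F(\vect{z}^*)+F((\vect{x}+\vect{z})\vee\vect{y})$; and once with base points $\vect{0}\leq\vect{x}\wedge\vect{y}$ and displacement $\vect{z}^*=\vect{y}-\vect{x}\wedge\vect{y}$, which gives $F(\vect{z}^*)+F(\vect{x}\wedge\vect{y})\geq F(\vect{y})+F(\vect{0})$. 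Adding the two, cancelling $F(\vect{z}^*)$ and discarding non-negative terms yields the lemma. If you want a first-order route you would have to reproduce exactly these two pairings; your decomposition of $\vect{z}^*$ into $(\vect{z}^*\wedge\vect{z})+\bigl((\vect{z}^*\vee\vect{z})-\vect{z}\bigr)$ does not.
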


\begin{proof}
First, we claim the following two inequalities:
\begin{align}
F(\vect{x} \vee \vect{y}) + F(\vect{z} \vee \vect{z}^*) &\geq F(\vect{z}^*) + F((\vect{x}+\vect{z}) \vee \vect{y})	\label{online-l1-eq-1}  \\
F(\vect{z}^*) + F(\vect{x} \wedge \vect{y}) &\geq F(\vect{y}) + F(\vect{0}).  \label{online-l1-eq-2}
\end{align}
Assuming (\ref{online-l1-eq-1}) and (\ref{online-l1-eq-2}) holds, we get:
\begin{align*}
F(\vect{x} \vee \vect{y}) + F(\vect{z} \vee \vect{z}^*) + F(\vect{x} \wedge \vect{y}) + F(\vect{z} \vee \vect{z}^*)  &\geq  F(\vect{y}) + F(0)  + F((\vect{x}+\vect{z}) \vee \vect{y}) +  F(\vect{z} \vee \vect{z}^*) \\
&\geq F(\vect{y})
\end{align*}
and the lemma follows. In the remaining, we prove the above two inequalities.

First, we establish the following identity.
\begin{align}	\label{eq:identity}
\vect{x} \vee \vect{y} - \vect{z}^* &= (\vect{x}+\vect{z}) \vee \vect{y} - \vect{z} \vee \vect{z}^*
\end{align}
For this purpose, we will show that both the RHS and LHS of (\ref{eq:identity}) are equal to $\vect{x}$. For the LHS we can write $\vect{x} \vee \vect{y} - \vect{z}^* = \vect{x} \vee \vect{y} - (\vect{x} \vee \vect{y} - \vect{x}) = \vect{x}$. For the RHS, let us consider any coordinate $i \in [n]$, and show that the following expression equals $x_i$:
$$
(x_i + z_i) \vee y_i - z_i \vee z^*_i = (x_i + z_i) \vee y_i - ((x_i + z_i) - x_i) \vee ((x_i \vee y_i) - x_i).
$$
\begin{description}
	\item[Case $(x_i + z_i) \geq y_i$.] So $(x_i + z_i)$ is larger than both $y_{i}$ and $x_{i}$. Therefore,
	\begin{align*}
		(x_i + z_i) \vee y_i - ((x_i + z_i) - x_i) \vee ((x_i \vee y_i) - x_i)
		= (x_i + z_i) - ((x_i + z_i) - x_i) = x_{i}.
	\end{align*}
	\item[Case $(x_i + z_i) < y_i$.] So $(x_i \vee y_i) = y_{i}$. Therefore,
	\begin{align*}
		(x_i + z_i) \vee y_i - ((x_i + z_i) - x_i) \vee ((x_i \vee y_i) - x_i)
		= y_{i} - ((x_i \vee y_i) - x_i) = x_{i}.
	\end{align*}
\end{description}
Hence, the RHS of (\ref{eq:identity}) is equal to $\vect{x}$. So the identity (\ref{eq:identity}) holds.

We are now proving Inequality (\ref{online-l1-eq-1}), i.e.,
$$
F(\vect{x} \vee \vect{y}) -F(\vect{z}^*) \geq  F((\vect{x}+\vect{z}) \vee \vect{y}) -  F(\vect{z} \vee \vect{z}^*).
$$
The above inequality holds due to (\ref{eq:identity}), the fact $\vect{z}^* \leq \vect{z} \vee \vect{z}^*$ and
the diminishing return property of $F$.

Now we prove Inequality (\ref{online-l1-eq-2}), i.e.,
$$
F(\vect{z}^*) - F(\vect{0}) \geq F(\vect{y}) - F(\vect{x} \wedge \vect{y}).
$$
The above inequality holds by the diminishing return property and
$$
\vect{y} - \vect{x} \wedge \vect{y} = \vect{x} \vee \vect{y} - \vect{x} = \vect{z}^* - \vect{0}
\qquad \text{~and~} \qquad
\vect{0} \leq \vect{x} \wedge \vect{y}.
$$
\end{proof}

\end{document}